\newtheorem{prop}{Proposition}
\newtheorem{cond}{C}
\DeclareRobustCommand{\KL}[2]{\ensuremath{\textrm{KL}\left(#1\;\|\;#2\right)}}
\newcommand{\latent}{\mathbf{z}}
\newcommand{\dimlv}{d_z}
\newcommand{\data}{\mathbf{x}}
\newcommand{\regdata}{\mathbf{y}}
\newcommand{\varparams}{\lambda}
\newcommand{\dimvp}{d_\lambda}
\newcommand{\modparams}{\theta}
\newcommand{\step}{\varepsilon}
\newcommand{\Prb}{\mathbb{P}}
\newcommand{\Exp}{\mathbb{E}}
\newcommand{\grad}{\nabla}
\newcommand{\given}{\, | \,}
\newcommand{\idxby}{\, ; \,}
\newcommand{\eqdef}{:=}
\newcommand{\reals}{\mathbb{R}}
\newcommand{\Norm}{\mathcal{N}}
\newcommand\dif{\mathop{}\!\mathrm{d}}
\newacronym{ODE}{ODE}{ordinary differential equation}
\newacronym{SA}{SA}{stochastic approximation}
\newacronym{MSA}{MSA}{Markovian stochastic approximation}
\newacronym{SGD}{SGD}{stochastic gradient descent}
\newacronym{ML}{ML}{maximum likelihood}
\newacronym{EM}{EM}{expectation-maximization}
\newacronym{MC}{MC}{Monte Carlo}
\newacronym{MCMC}{MCMC}{Markov chain Monte Carlo}
\newacronym{IS}{IS}{importance sampling}
\newacronym{CIS}{CIS}{conditional importance sampling}
\newacronym{SMC}{SMC}{sequential Monte Carlo}
\newacronym{CSMC}{CSMC}{conditional sequential Monte Carlo}
\newacronym{SIR}{SIR}{sample importance resampling}
\newacronym{VI}{VI}{variational inference}
\newacronym{KL}{KL}{Kullback-Leibler}
\newacronym{ELBO}{ELBO}{\emph{evidence lower bound}}
\newacronym{SVI}{SVI}{stochastic variational inference}
\newacronym{CHIVI}{CHIVI}{$\chi$-divergence variational inference}
\newacronym{PSVI}{PSVI}{persistent stochastic variational inference}
\newacronym{MSC}{MSC}{Markovian score climbing}
\newacronym{EP}{EP}{expectation propagation}
\newacronym{RWS}{RWS}{reweighted wake-sleep}
\providecommand{\eg}{e.g.\@\xspace}
\providecommand{\ie}{i.e.\@\xspace}
\providecommand{\wip}{w.p.\@\xspace}
\providecommand{\cf}{cf.\@\xspace}
\providecommand{\wrt}{w.r.t.\@\xspace}
\title{Markovian Score Climbing: Variational Inference with KL(p||q)}
\author{%
  Christian A. Naesseth \\
  Columbia University, USA\\
  \texttt{christian.a.naesseth@columbia.edu} \\
  % examples of more authors
  \And
  Fredrik Lindsten \\
  Link\"oping University, Sweden \\
  \texttt{fredrik.lindsten@liu.se} \\
  % Address \\
  % \texttt{email} \\
  \And
  David Blei \\
  Columbia University, USA \\
  \texttt{david.blei@columbia.edu} \\
  % Address \\
  % \texttt{email} \\
  % \And
  % Coauthor \\
  % Affiliation \\
  % Address \\
  % \texttt{email} \\
  % \And
  % Coauthor \\
  % Affiliation \\
  % Address \\
  % \texttt{email} \\
}
\begin{document}

\maketitle

\begin{abstract}
  Modern \gls{VI} uses stochastic gradients to avoid intractable
  expectations, enabling large-scale probabilistic inference in
  complex models.  \gls{VI} posits a family of approximating
  distributions $q$ and then finds the member of that family that is
  closest to the exact posterior $p$. Traditionally, \gls{VI}
  algorithms minimize the ``exclusive \gls{KL}'' $\KL{q}{p}$, often
  for computational convenience. Recent research, however, has also
  focused on the ``inclusive \gls{KL}'' $\KL{p}{q}$, which has good
  statistical properties that makes it more appropriate for certain
  inference problems.  This paper develops a simple algorithm for
  reliably minimizing the inclusive \gls{KL} using stochastic gradients with vanishing bias. % Consider a valid \gls{MCMC} method, a Markov chain whose stationary distribution is $p$. The algorithm we develop iteratively samples the chain $\latent[k]$, and then uses those samples to follow the score function of the variational approximation, $\nabla \log q(\latent[k])$ with a  Robbins-Monro step-size schedule.
This method, which we call \gls{MSC}, converges to a local optimum of the inclusive \gls{KL}. It does not suffer from the systematic errors inherent in existing methods, such as Reweighted Wake-Sleep and Neural Adaptive Sequential Monte Carlo, which lead to bias in their final
  estimates.
  % In a variant that ties the variational approximation
  %directly to the Markov chain, \gls{MSC} further provides a new
  %algorithm that melds \gls{VI} and \gls{MCMC}.  
  We illustrate convergence on a toy
  model and demonstrate the utility of \gls{MSC} on Bayesian probit
  regression for classification
%  %, deep Markov models to learn the dynamics of simulated spiking neurons, 
  as well as a stochastic volatility model for financial data.
\end{abstract}

% INTRODUCTION
\glsresetall
\section{Introduction}\label{sec:introduction}
\Gls{VI} is an optimization-based approach for approximate posterior
inference.  It posits a family of approximating distributions $q$ and
then finds the member of that family that is closest to the exact
posterior $p$. Traditionally, \gls{VI} algorithms minimize the
``exclusive \gls{KL}'' $\KL{q}{p}$ \citep{Jordan1999,blei2017variational}, which
leads to a computationally convenient optimization. For a restricted
class of models, it leads to coordinate-ascent algorithms \citep{ghahramani2001propagation}.  For a
wider class, it leads to efficient computation of unbiased gradients
for stochastic optimization \citep{Paisley2012,Salimans2013,Ranganath2014}. However, optimizing the exclusive KL
results in an approximation that underestimates the posterior
uncertainty \citep{minka2005divergence}.
To address this limitation, \gls{VI} researchers have considered
alternative divergences \citep{li2016renyi,dieng2017chi}.  One candidate is the ``inclusive \gls{KL}''
$\KL{p}{q}$
\citep{gu2015neural,Bornschein2015,finke2019importanceweighted}.  This divergence more accurately captures posterior uncertainty, but results in a challenging optimization problem.

In this paper, we develop \gls{MSC}, a simple algorithm for reliably
minimizing the inclusive \gls{KL}. Consider a valid \gls{MCMC} method
\citep{robert2013monte}, a Markov chain whose stationary distribution
is $p$.  \gls{MSC} iteratively samples the Markov chain $\latent [k]$,
and then uses those samples to follow the score function of the
variational approximation $\grad \log q(\latent[k])$ with a
Robbins-Monro step-size schedule
\citep{robbins1951stochastic}. 
%Best performance of \gls{MSC} is obtained by including the interim variational approximation in the Markov kernel, \eg using the \gls{CIS} or \gls{CSMC} \citep{andrieu2010particle} kernels. \gls{MSC} provably converges to an optimum of the inclusive \gls{KL}.
Importantly, we allow the \gls{MCMC} method to depend on the current variational approximation. This enables a gradual improvement of the \gls{MCMC} as the \gls{VI} converges. We illustrate this link between the methods by using \gls{CIS} or \gls{CSMC} \citep{andrieu2010particle}.

Other \gls{VI} methods have targeted the same objective, including
\gls{RWS} \citep{Bornschein2015} and neural adaptive \gls{SMC}
\citep{gu2015neural}.  However, these methods involve biased
gradients of the inclusive \gls{KL}, which leads to bias in their
final estimates. In contrast, \gls{MSC} provides consistent gradients for essentially no added cost while providing better variational approximations. \gls{MSC} provably converges to an optimum of the inclusive \gls{KL}.

In empirical studies, we demonstrate the convergence properties
and advantages of \gls{MSC}. First, we illustrate the systematic
errors of the biased methods and how \gls{MSC} differs on a toy
skew-normal model. Then we compare \gls{MSC} with \gls{EP} and
\gls{IS}-based optimization
\citep{Bornschein2015,finke2019importanceweighted} on a Bayesian
probit classification example with benchmark data. Finally, we apply
\gls{MSC} and \gls{SMC}-based optimization \citep{gu2015neural} to fit
a stochastic volatility model on exchange rate data.

\paragraph{Contributions.} The contributions of this paper are
\begin{enumerate*}[label=(\roman*)]
\item developing \acrlong{MSC}, a simple algorithm that provably minimizes $\KL{p}{q}$; 
\item studying systematic errors in existing methods that lead to
  bias in their variational approximation; and
\item empirical studies that confirm convergence and illustrates the utility of \gls{MSC}.
\end{enumerate*}

\paragraph{Related Work.}  Much recent effort in \gls{VI} has focused
on optimizing cost functions that are not the exclusive \gls{KL}
divergence. For example R\'{e}nyi divergences and $\chi$ divergence are studied in  \citep{li2016renyi,dieng2017chi}. The most
similar to our work are the methods in
\citep{Bornschein2015,gu2015neural,finke2019importanceweighted}, using
\gls{IS} or \gls{SMC} to optimize the inclusive \gls{KL}
divergence. The \gls{RWS} algorithm \citep{Bornschein2015} uses
\gls{IS} both to optimize model parameters and the variational
approximation. Neural adaptive \gls{SMC} \citep{gu2015neural}
jointly learn an approximation to the posterior and optimize the
marginal likelihood of time series with gradients estimated by
\gls{SMC}. In \citep{finke2019importanceweighted} connections
between importance weighted autoencoders \citep{Burda2016}, adaptive
\gls{IS} and methods like the \gls{RWS} are drawn. These three works all rely on
\gls{IS} or \gls{SMC} to estimate expectations with respect to the
posterior. This introduces a systematic bias in the gradients that
leads to a solution which is not a local optimum to the inclusive
\gls{KL} divergence. In \citep{paige2016inference} inference networks are learnt for data simulated from the model rather than observed data.

Another line of work studies the combination of \gls{VI} with \gls{MC}
methods. \citet{salimans2015markov} take inspiration from the
\gls{MCMC} literature to define their variational approximation.  The
method in \citep{hoffman2019neutralizing} uses the variational
approximation to improve Hamiltonian \gls{MC}.  Variational \gls{SMC}
 \citep{naesseth18a,anh2018autoencoding,maddison2017}
uses the \gls{SMC} sample process itself to define an approximation to
the posterior. Follow up work
\citep{lawson2018twisted,moretti2019particle} improve on variational
\gls{SMC} in various ways by using \emph{twisting}
\citep{guarniero2017iterated,heng2017controlled,lindsten2018}. 
Another approach takes a \gls{MC} estimator of the marginal
likelihood and turn it into a posterior approximation \citep{domke2019divide}.  The method in
\citep{habib2018auxiliary} uses auxiliary variables to define a more
flexible approximation to the posterior, then subsequently at test
time apply \gls{MCMC}.  These methods all optimize a variational
approximation based on \gls{MC} methods to minimize the exclusive
\gls{KL} divergence. On the contrary, the method proposed in this
paper minimizes the inclusive \gls{KL} divergence.
The method in \citep{hoffman2017learning} optimizes an initial
approximation to the posterior in exclusive \gls{KL}, then refines
this with a few iterations of \gls{MCMC} to estimate gradients with
respect to the model parameters.  Defining the
variational approximation as an initial distribution to which a few
steps of \gls{MCMC} is applied, and then optimize a new contrastive
divergence is done in \citep{ruiz2019a}. This divergence is different from the inclusive \gls{KL} and \gls{MCMC} is used as a part of the variational approximation
rather than gradient estimation.
Another line of work studies combinations of \gls{MC} and \gls{VI} using amortization \citep{li2017approximate,wang2018meta,wu2020}.

Using \gls{MC} together with stochastic optimization, for \eg maximum likelihood estimation of latent variable models, is studied in \citep{Gu7270,KuhnL:2004,AndrieuV:2014,dieng2019reweighted}. In contrast the proposed method uses it for \gls{VI}.
Viewing \gls{MSC} as a way to learn a proposal distribution for \gls{IS} means it is related to the class of adaptive \gls{IS} algorithms \citep{bugallo2017adaptive,cappe2004population,douc2007convergence,
cappe2008adaptive}. We compare to \gls{IS}/\gls{SMC}-based optimization, as outlined in the background, in the experimental studies which can be considered to be special cases of adaptive \gls{IS}/\gls{SMC}.

Concurrent and independent work using \gls{MCMC} to optimize the inclusive \gls{KL} was studied in \citep{ou2020joint}. The difference with our work lies in the Markov kernels used, our focus on continuous latent variables, and our study of the impact of large-scale exchangeable data.

%%% Local Variables:
%%% mode: latex
%%% TeX-master: "main_msavi"
%%% End:

% BACKGROUND
\section{Background}
\label{sec:background}
Let $p(\latent,\data)$ be a probabilistic model for the latent
(unobserved) variables $\latent$ and data $\data$. In Bayesian
inference the main concern is computing the posterior distribution
$p(\latent\given\data)$, the conditional distribution of the latent
variables given the observed data. The posterior is $p(\latent\given\data) = \nicefrac{p(\latent,\data)}{p(\data)}$. The normalization constant is the marginal likelihood $p(\data)$,
computed by integrating (or summing) the joint model
$p(\latent,\data)$ over all values of $\latent$. For most models of
interest, however, exactly computing the posterior is intractable, and
we must resort to a numerical approximation.

\subsection{Variational Inference with KL(p||q)}
One approach to approximating the posterior is with \gls{VI}.
This turns the intractable problem of computing the posterior into
an optimization problem that can be solved numerically.  The idea is
to first posit a variational family of approximating distributions
$q(\latent\idxby \varparams)$, parametrized by $\varparams$. Then
minimize a metric or divergence so that the variational approximation
is close to the posterior,
$q(\latent\idxby \varparams) \approx p(\latent\given\data)$.

The most common \gls{VI} objective is to minimize the exclusive
\gls{KL}, $\KL{q}{p}$.  This objective is an expectation with respect
to the approximating distribution $q$ that is convenient to
optimize. But this convenience comes at a cost---the $q$ optimized to
minimize $\KL{q}{p}$ will underestimate the variance of the posterior 
\citep{dieng2017chi,blei2017variational,turner2011two}.

One way to mitigate this issue is to instead optimize the inclusive
\gls{KL},
\begin{align}
  \KL{p(\latent\given\data)}{q(\latent\idxby \varparams)} \eqdef \Exp_{ p(\latent\given\data)}\left[ \log p(\latent\given\data)-\log q(\latent\idxby \varparams) \right].
	\label{eq:divoptproblem}
\end{align}
This objective, though more difficult to work with, does not lead to
underdispersed approximations. For too simplistic $q$ it might lead to approximations that stretch to cover $p$ putting mass even where $p$ is small, thus leading to poor predictive distributions. However, in the context of \gls{VI} inclusive \gls{KL} has
motivated among others neural adaptive \gls{SMC}~\citep{gu2015neural},
\gls{RWS}~\citep{Bornschein2015}, and \gls{EP}
\citep{minka2001expectation}. This paper develops \gls{MSC}, a new
algorithm to minimize the inclusive \gls{KL} divergence.

Minimizing \cref{eq:divoptproblem} is equivalent to minimizing the cross entropy $L_{\textrm{KL}}(\varparams)$,
\begin{align}
  \min_\varparams \,
  L_{\textrm{KL}}(\varparams) &\eqdef \min_\varparams \,
                                \Exp_{ p(\latent\given\data)}\left[- 
                                \log  q(\latent\idxby \varparams) \right] \label{eq:kloptproblem}.
\end{align}
The gradient \wrt the variational parameters is
\begin{align}
  g_{\textrm{KL}}(\varparams) &\eqdef
                                \grad L_{\textrm{KL}}(\varparams) = \Exp_{ p(\latent\given\data)}\left[ - s(\latent\idxby \varparams) \right],
                                \label{eq:fixedpointeq}
\end{align}
where we define $s(\latent\idxby \varparams)$ to be the \emph{score function},
\begin{align}
  s(\latent\idxby \varparams) \eqdef \grad_\varparams \log q(\latent\idxby\varparams).
  \label{eq:score}
\end{align}
Because the cross entropy is an expectation with respect to the
(intractable) posterior, computing its gradient pointwise is
intractable. 
Recent algorithms for solving \cref{eq:kloptproblem} focus on \emph{stochastic gradient descent}~\citep{Bornschein2015,gu2015neural,finke2019importanceweighted}.

\subsection{Stochastic Gradient Descent with IS}\label{sec:bkg:biased}

We use \gls{SGD} in \gls{VI} when the gradients of the objective are
intractable. The \gls{SGD} updates 
\begin{align}
  \varparams_k &= \varparams_{k-1} - \step_k\, \widehat{g}_{\textrm{KL}}(\varparams_{k-1}),
                 \label{eq:sa:algorithm}
\end{align}
converges to a local optimum of \cref{eq:kloptproblem} if the gradient estimate $\widehat{g}_{\textrm{KL}}$ is
unbiased, $\Exp\left[\widehat{g}_{\textrm{KL}}(\varparams)\right] =
g_{\textrm{KL}}(\varparams)$, and the step sizes satisfy
$\sum_k \step_k^2 < \infty$, $\sum_k \step_k = \infty$
\citep{robbins1951stochastic,kushner2003stochastic}. 

When the objective is the exclusive $\KL{q}{p}$, we can
use score-function gradient estimators
\citep{Paisley2012,Salimans2013,Ranganath2014}, reparameterization
gradient estimators \citep{Rezende2014,Kingma2014}, or combinations of
the two \citep{Ruiz2016,Naesseth2017}. These methods provide unbiased
stochastic gradients that can help find a local optimum of the
exclusive \gls{KL}. 

However, we consider minimizing the inclusive $\KL{p}{q}$ \cref{eq:divoptproblem}, for which gradient estimation is difficult. It requires
%But consider minimizing the inclusive $\KL{p}{q}$ \cref{eq:divoptproblem}. 
%For inclusive \gls{KL} gradient estimation is difficult; it requires 
an expectation with respect to the posterior $p$.
One strategy is to use \gls{IS} \citep{robert2013monte} to rewrite the
gradient as an expectation with respect to $q$. Specifically, the
gradient of the inclusive \gls{KL} is proportional to
\begin{align}
  \grad_\varparams L_{\textrm{KL}}(\varparams) &\propto -\Exp_{ q(\latent\idxby \varparams) } \left[\frac{p(\latent,\data)}{q(\latent\idxby \varparams)} s(\latent\idxby \varparams) \right],
\end{align}
where the constant of proportionality $\nicefrac{1}{p(\data)}$ is
independent of the variational parameters and will not affect the
solution of the corresponding fixed point equation. This gradient is
unbiased, but estimating it using standard \gls{MC} methods can lead
to high variance and poor convergence.

Another option \citep{gu2015neural,Bornschein2015} is the
self-normalized \gls{IS} (or corresponding \gls{SMC}) estimate 
\begin{align}
  \grad_\varparams L_{\textrm{KL}}(\varparams) &\approx -\sum_{s=1}^S \frac{w_s}{\sum_{r=1}^S w_r} s (\latent^s \idxby \varparams) , \label{eq:sgd:inclkl}
\end{align}
where
$w_s = \nicefrac{p(\latent^s,\data)}{q(\latent^s\idxby \varparams)}$,
$\latent^s \sim q(\latent^s \idxby \varparams)$, and
$s(\latent\idxby\varparams)=\grad_\varparams \log
q(\latent\idxby\varparams)$. However, \cref{eq:sgd:inclkl} is
not unbiased. The estimator suffers from systematic error and,
consequently, the fitted variational parameters are no longer optimal
with respect to the original minimization problem in
\cref{eq:kloptproblem}. (See
\citep{naesseth2019elements,mcbook,robert2013monte} for details
about \gls{IS} and \gls{SMC} methods)
\gls{MSC} addresses this shortcoming, introducing an algorithm that provably converges to a solution of \cref{eq:kloptproblem}. In the remainder of the paper \gls{IS} refers to self-normalized \gls{IS}.
% \citet{gu2015neural,Bornschein2015} propose to optimize the
% inclusive \gls{KL} \cref{eq:kloptproblem} using stochastic
% approximations of the gradient based on \gls{IS} or \gls{SMC}, see
% \eg \citet{robert2013monte,naesseth2019elements}. However, by using
% \gls{IS}/\gls{SMC} to estimate the gradient, a systematic biased is
% introduced in the \gls{SA} algorithm which affects convergence. We
% study the impact of the bias in \cref{sec:biased}, and suggest a way
% to resolve it in \cref{sec:method}.

%%% Local Variables:
%%% mode: latex
%%% TeX-master: "main_msavi"
%%% End:

% IMPORTANCE SAMPLING AND BIASED SA
%\input{sec_biased}

% MARKOVIAN STOCHASTIC APPROXIMATION
\section{Markovian Score Climbing}\label{sec:method}
The key idea in \gls{MSC} is to use \gls{MCMC} methods to estimate the intractable gradient. Under suitable conditions on the algorithm, \gls{MSC} is guaranteed to converge to a local optimum of $\KL{p}{q}$. 
%In this section we illustrate how to construct \gls{MSA} algorithms for optimizing the inclusive \gls{KL} divergence. 

First, we discuss generic \gls{MCMC} methods to estimate gradients in a \gls{SGD} algorithm. Importantly, the \gls{MCMC} method can depend on the current \gls{VI} approximation which provides a tight link between \gls{MCMC} and \gls{VI}. Next we exemplify this connection by introducing \gls{CIS}, an example Markov kernel that is a simple modification of \gls{IS}, where the \gls{VI} approximation is used as a proposal. 
%Then, we discuss \gls{CIS}, an example Markov kernel that is a simple modification of \gls{IS}. 
The extra computational cost is negligible compared to the biased approaches discussed in \cref{sec:bkg:biased}, \gls{CIS} only generates a single extra categorical random variable per iteration. The corresponding extension to \gls{SMC}, \ie the \gls{CSMC} kernel, is discussed in the supplement. Next, we discuss learning model parameters. Then,  we show that the resulting \gls{MSC} algorithm is exact in the sense that it converges asymptotically to a local optima of the inclusive \gls{KL} divergence. Finally, we discuss large-scale data.

%\begin{algorithm}[th]
%\DontPrintSemicolon
%\SetKwInOut{Input}{Input}\SetKwInOut{Output}{Output}
% \Input{Markov kernel $M(\latent'\given \latent)$ with stationary distribution $p(\latent\given\data)$, variational family $q(\latent\idxby\varparams)$, initial $\varparams_0$, initial $\latent[0]$, step size sequence $\step_k$, and number of iterations $K$.}
% \Output{$\varparams_K \approx \varparams^\star$.}
% \BlankLine
% \For{$\displaystyle k=1,\ldots,K$}{
%    Sample $\latent[k] \sim M(\cdot\given \latent[k-1])$\;
%    
%    Compute $s(\latent[k]\idxby\varparams_{k-1}) = \grad_\varparams\log q(\latent[k]\idxby\varparams_{k-1})$\;
%    
%    Set $\varparams_k = \varparams_{k-1}+\step_k s(\latent[k]\idxby\varparams_{k-1})$\;
%}
% \caption{Markovian Score Climbing}\label{alg:msa}
%\end{algorithm}
\subsection{Stochastic Gradient Descent using MCMC}
When using gradient descent to optimize the inclusive \gls{KL} we must compute an expectation of the score function $s(\latent\idxby\varparams)$ \cref{eq:score} with respect to the true posterior. To avoid this intractable expectation we propose to use stochastic gradients estimated using samples generated from a \gls{MCMC} algorithm, with the posterior as its stationary distribution. The key step to ensure convergence, without having to run an infinite inner loop of \gls{MCMC} updates, is to \emph{not} re-initialize the Markov chain at each step $k$. Instead, the sample $\latent[k-1]$ used to estimate the gradient at step $k-1$ is passed to a Markov kernel $\latent[k] \sim M(\cdot\given \latent[k-1])$, with the posterior as its stationary distribution, to get an updated $\latent[k]$ that is then used to estimate the current gradient, \ie the score $s(\latent[k]\idxby\varparams)$. This leads to a Markovian stochastic approximation algorithm \citep{Gu7270}, where the noise in the gradient estimate is Markovian. Because we are moving in an ascent direction of the score function at each iteration and using \gls{MCMC}, we refer to the method developed in this paper as \acrlong{MSC}.

 It is not a requirement that the Markov kernel $M$ is independent of the variational parameters $\varparams$. In fact it is key for best performance of \gls{MSC} that we use the variational approximation to define the Markov chain. We summarize \gls{MSC} in \cref{alg:msa}. %Note that the only difference compared to the standard \gls{MSC} algorithm is that the Markov kernel depends on the variational parameters $\varparams$.
\begin{algorithm}[th]
\DontPrintSemicolon
\SetKwInOut{Input}{Input}\SetKwInOut{Output}{Output}
\Input{Markov kernel $M(\latent'\given \latent\idxby\varparams)$ with stationary distribution $p(\latent\given\data)$, variational family $q(\latent\idxby\varparams)$, initial $\varparams_0$, initial $\latent[0]$, step size sequence $\step_k$, and number of iterations $K$.}
 \Output{$\varparams_K \approx \varparams^\star$.}
 \BlankLine
 \For{$\displaystyle k=1,\ldots,K$}{
    Sample $\latent[k] \sim M(\cdot \given \latent[k-1]\idxby\varparams_{k-1})$\;
    
    Compute $s(\latent[k]\idxby\varparams_{k-1})= \grad_\varparams\log q(\latent[k]\idxby\varparams_{k-1})$\;
    
    Set $\varparams_k = \varparams_{k-1}+\step_k s(\latent[k]\idxby\varparams_{k-1})$\;
}	
 \caption{Markovian Score Climbing}\label{alg:msa}
\end{algorithm}

Next, we discuss \gls{CIS} \citep{andrieu2010particle,naesseth2019elements}, an example Markov kernel with adaptation that is a simple modifications of its namesake \gls{IS}. The corresponding extension to \gls{SMC}, the \gls{CSMC} kernel \citep{andrieu2010particle,naesseth2019elements}, is discussed in the supplement. Using these Markov kernels to estimate gradients, rather than \gls{IS} and \gls{SMC} \citep{gu2015neural,Bornschein2015}, lead to algorithms that are simple modifications of their non-conditional counterparts but provably converge to a local optimum of the inclusive \gls{KL} divergence.

\subsection{Conditional Importance Sampling}
\gls{CIS} is an \gls{IS}-based Markov kernel with $p(\latent\given\data)$ as its stationary distribution \citep{andrieu2010particle,naesseth2019elements}. It modifies the classical \gls{IS} algorithm by retaining one of the samples from the previous iteration, the so-called \emph{conditional sample}. Each iteration consists of three steps: generate new samples from a proposal, compute weights, and then update the conditional sample for the next iteration. We explain in detail below.

First, set the first proposed sample to be equal to the conditional sample from the previous iteration, \ie $\latent^1 = \latent[k-1]$, and propose the remaining $S-1$ samples from a proposal distribution $\latent^i \sim q(\latent\idxby\varparams), \quad i=2,\ldots,S.$
The proposal does not necessarily need to be equal to the variational approximation, a common option is to use the model prior $p(\latent)$. However, we will in the remainder of this paper assume that the variational approximation $q(\latent\idxby\varparams)$ is used as the proposal. This provides a link between the \gls{MCMC} proposal and the current \gls{VI} approximation.
Then, compute the importance weights for all $S$ samples, including the conditional sample. The importance weights for $i=1,\ldots,S$ are
$w^i = \nicefrac{p(\latent^i,\data)}{q(\latent^i\idxby\varparams)}$, $\bar{w}^i = \nicefrac{w^i}{\sum_{j=1}^S w^j}$.
%\end{align*}
Finally, generate an updated conditional sample by picking one of the proposed values with probability proportional to its (normalized) weight, \ie, $\latent[k] = \latent^J$,
where $J$ is a discrete random variable with probability $\Prb(J=j) = \bar{w}^j$.

Iteratively repeating this procedure constructs a Markov chain with the posterior $p(\latent\given\data)$ as its stationary distribution \citep{andrieu2010particle,naesseth2019elements}. With this it is possible to attain an estimate of the (negative) gradient \wrt the variational parameters of \cref{eq:kloptproblem}:
\begin{align}
	s(\latent[k]\idxby\varparams) &= \grad_{\varparams} \log q(\latent[k]\idxby\varparams),
	\label{eq:kloptproblem:cisgradient}
\end{align}
where $\latent[k]$ is the conditional sample retained at each iteration of the \gls{CIS} algorithm. Another option is to make use of all samples at each iteration, \ie the Rao-Blackwellized estimate, $\widehat{g}_{\textrm{KL}}(\varparams) = \sum_{i=1}^S \bar{w}^i  s(\latent^i\idxby\varparams).$
We summarize one full iteration of the \gls{CIS} in \cref{alg:cis}. %This algorithm defines a Markov kernel $M(\latent[k]\given \latent[k-1]\idxby\varparams)$ with the posterior as its stationary distribution.
\begin{algorithm}
\DontPrintSemicolon
\SetKwInOut{Input}{Input}\SetKwInOut{Output}{Output}
 \Input{Model $p(\latent,\data)$, proposal $q(\latent\idxby\varparams)$, conditional sample $\latent[k-1]$, and total number of internal samples $S$.}
 \Output{$\latent[k] \sim M(\cdot\given\latent[k-1]\idxby\varparams)$, updated conditional sample.}
 \BlankLine
 Set $\displaystyle \latent^1 = \latent[k-1]$\;
 
 Sample $\displaystyle \latent^i \sim q(\latent\idxby\varparams)$ for $\displaystyle i=2,\ldots,S$\;
 
 Compute $\displaystyle w^i = \nicefrac{p(\latent^i,\data)}{q(\latent^i\idxby\varparams)}$, $\displaystyle \bar{w}^i = \nicefrac{w^i}{\sum_{j=1}^S w^j}$ for $\displaystyle i=1,\ldots,S$ \;
 
 Sample $\displaystyle J$ with probability $\displaystyle \Prb(J=j) \propto  \bar{w}^j$\;
 
 Set $\displaystyle \latent[k] = \latent^J$\;
 \caption{Conditional Importance Sampling}\label{alg:cis}
\end{algorithm}

%While in principle we could follow the same approach to construct an estimate for the gradient of the $\chi^2$ divergence \cref{eq:chioptproblem} as follows
%\begin{align*}
%	\grad_\varparams L_{\chi}(\varparams) &\propto \Exp_{p(\latent\given\data)}\left[ -\frac{p(\latent,\data)}{q(\latent\idxby\varparams)} \grad_\varparams \log q(\latent\idxby\varparams) \right] \approx -\frac{p(\bar{\latent},\data)}{q(\bar{\latent}\idxby\varparams)} \grad_\varparams \log q(\bar{\latent}\idxby\varparams),
%\end{align*}
%where $\bar{\latent}$ is again the conditional sample retained at each iteration of the \gls{CIS} algorithm. This would ensure convergence to a local optima of the original optimization problem, however ignoring the normalization constant $p(\data)$ can lead to excessive variance in the gradient estimate. 
%Below we will introduce an extension to the \gls{CIS} kernel that lets us consider directly the functional $-\nicefrac{p(\latent\given\data)}{q(\latent\idxby\varparams)} \grad_\varparams \log q(\latent\idxby\varparams)$, resulting in lower variance gradients and thus often faster convergence in practice.

\subsection{Model Parameters}\label{sec:modparams}
If the probabilistic model has unknown parameters $\modparams$ one solution is to assign them a prior distribution, include them in the latent variable $\latent$, and apply the method outlined above to approximate the posterior. However, an alternative solution is to use the \gls{ML} principle and optimize the marginal likelihood, $p(\data\idxby\modparams)$, jointly with the approximate posterior, $q(\latent\idxby\varparams)$. We propose to use \acrlong{MSC} based on the Fisher identity of the gradient
\begin{align}
	g_{\textrm{ML}}(\modparams) &= \grad_\modparams \log p(\data\idxby\modparams) = \grad_\modparams \log \int p(\latent, \data\idxby\modparams)\dif\latent= \Exp_{p_\modparams(\latent\given\data)}\left[\grad_\modparams \log p(\latent, \data\idxby\modparams)\right].
\end{align}
With a Markov kernel $M(\latent[k]\given \latent[k-1]\idxby \modparams,\varparams)$, with the posterior distribution $p(\latent\given\data\idxby\modparams)$ as its stationary distribution, the approximate gradient is
	$\widehat{g}_{\textrm{ML}}(\modparams) = \grad_\modparams \log p(\latent[k],\data\idxby\modparams)$.

The \gls{MSC} algorithm for maximization of the log-marginal likelihood, with respect to $\modparams$, and minimization of the inclusive \gls{KL} divergence, with respect to $\varparams$, is summarized in \cref{alg:msa:modparams}.
Using \gls{MSC} \emph{only} for \gls{ML} estimation of $\theta$, with a fixed Markov kenel $M$ and \emph{without} the \gls{VI} steps on lines 13 and 15, is equivalent to the \gls{MCMC}  \gls{ML} method in \citep{Gu7270}.
\begin{algorithm}
\DontPrintSemicolon
\SetKwInOut{Input}{Input}\SetKwInOut{Output}{Output}
 \Input{Markov kernel $M(\latent'\given \latent\idxby \modparams,\varparams)$ with stationary distribution $p(\latent\given\data\idxby\modparams)$, variational family $q(\latent\idxby\varparams)$, initial $\varparams_0, \latent[0], \modparams_0$, step size sequences $\step_k, \epsilon_k$, and iterations $K$.}
 \Output{$\varparams_K \approx \varparams^\star$, $\modparams_K \approx \modparams^\star$.}
 \BlankLine
 \For{$\displaystyle k=1,\ldots,K$}{
    Sample $\latent[k] \sim M(\cdot\given \latent[k-1]\idxby\modparams_{k-1},\varparams_{k-1})$\;
    
    Compute $s(\latent[k]\idxby\varparams_{k-1}) = \grad_\varparams\log q(\latent[k]\idxby\varparams_{k-1})$\;
    
    Compute $\widehat{g}_{\textrm{ML}}(\modparams_{k-1}) = \grad_\modparams \log p(\latent[k],\data\idxby\modparams_{k-1})$\;
    
    Set $\varparams_k = \varparams_{k-1}+\step_k s(\latent[k]\idxby\varparams_{k-1})$\;
    
    Set $\modparams_k = \modparams_{k-1}+\epsilon_k \widehat{g}_{\textrm{ML}}(\modparams_{k-1})$\;
}
 \caption{Markovian Score Climbing with ML}\label{alg:msa:modparams}
\end{algorithm}

\subsection{The Convergence of MSC}\label{seq:theory}
One of the main benefits of \gls{MSC} is that it is possible, under certain regularity conditions, to ensure that the variational parameter estimate $\varparams_K$ as provided by \cref{alg:msa} converges to a local optima of the inclusive \gls{KL} divergence as the number of iterations $K$ tend to infinity. 
We formalize the convergence result in \cref{prop:msa}. The result is an application of \citep[Theorem 1]{Gu7270} and based on  \citep[Theorem 3.17, page 304]{benveniste2012adaptive}. The proof is found in the Supplement.
\begin{prop}
Assume that C$1$--C$6$, detailed in the supplement, hold. If $\lambda_k$ for $k\geq 1$ defined by \cref{alg:msa} is a bounded sequence and almost surely visits a compact subset of the domain of attraction of $\varparams^\star$ infinitely often, then
\begin{align*}
\varparams_k \to \varparams^\star, \quad \textrm{almost surely.}
\end{align*}
\label{prop:msa}
\end{prop}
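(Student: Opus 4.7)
The plan is to recognize the MSC iteration as a \gls{MSA} in the sense of Benveniste--M\'etivier--Priouret and invoke \citep[Theorem 1]{Gu7270}, which specializes \citep[Theorem 3.17]{benveniste2012adaptive}. First I would identify the three structural ingredients of the MSA framework as realized by \cref{alg:msa}: (i) the state-dependent Markov kernel $M(\cdot\given \latent\idxby\varparams)$ with invariant law $p(\latent\given\data)$, (ii) the noisy update direction $s(\latent[k]\idxby\varparams_{k-1})$, and (iii) the mean field
\[
h(\varparams) \eqdef \Exp_{p(\latent\given\data)}\!\left[s(\latent\idxby\varparams)\right] = -g_{\textrm{KL}}(\varparams),
\]
so that the deterministic ODE $\dot{\varparams}(t) = h(\varparams(t))$ is the gradient flow of the cross entropy $L_{\textrm{KL}}$, and any stable equilibrium $\varparams^\star$ is a local minimum of the inclusive \gls{KL} divergence in \cref{eq:divoptproblem}.

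Next I would translate C$1$--C$6$ into the hypotheses of the BMP theorem: regularity of $h$ and of the score $s(\cdot\idxby\varparams)$ (Lipschitz continuity and controlled growth in $\varparams$), Robbins--Monro step-sizes $\sum \step_k = \infty$, $\sum \step_k^2 < \infty$, and most importantly the existence of a well-behaved solution $\hat{s}(\cdot\idxby\varparams)$ to the Poisson equation
\[
\hat{s}(\latent\idxby\varparams) - (M\hat{s})(\latent\idxby\varparams) = s(\latent\idxby\varparams) - h(\varparams),
\]
whose dependence on $\varparams$ is sufficiently smooth. These ingredients yield the standard decomposition of $\varparams_k - \varparams_{k-1} - \step_k h(\varparams_{k-1})$ into a martingale-difference term, a telescoping remainder, and a perturbation term from the drift of the kernel as $\varparams$ updates; each term is controlled uniformly over bounded $\varparams$ and summable along trajectories.

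Given this decomposition, the BMP/Gu--Kuo result states that if the iterates remain bounded (first hypothesis of the proposition) and return infinitely often to a compact set lying in the domain of attraction of $\varparams^\star$ (second hypothesis), then the trajectory is eventually captured by the basin and $\varparams_k \to \varparams^\star$ almost surely. The Lyapunov function $V(\varparams) = L_{\textrm{KL}}(\varparams) - L_{\textrm{KL}}(\varparams^\star)$ supplies the strict decrease along the ODE near $\varparams^\star$ needed for this capture argument, and this is guaranteed by the regularity assumed in C$1$--C$6$.

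The main obstacle is verifying the Poisson-equation hypothesis \emph{uniformly} over $\varparams$ in compact sets. The kernel $M$ here is state-dependent: for \gls{CIS} (and \gls{CSMC}) it depends on $\varparams$ through both the proposal $q(\cdot\idxby\varparams)$ and the self-normalized importance weights, so one must show that a uniform drift/minorization holds, allowing the resolvent $(\mathrm{Id}-M)^{-1}$ to act boundedly on the class of test functions containing $s(\cdot\idxby\varparams) - h(\varparams)$ and that it varies Lipschitz-continuously with $\varparams$. Conditions C$1$--C$6$ are tailored to supply exactly this, and the remaining ODE-stability reasoning is then standard; the real content of the proof is packaging these technical bounds into BMP's framework rather than any novel dynamical argument.
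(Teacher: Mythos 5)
Your proposal takes essentially the same route as the paper: both cast \cref{alg:msa} as a Markovian stochastic approximation, identify the mean-field ODE associated with the score update (the gradient flow of the cross entropy, with $\varparams^\star$ an equilibrium), and conclude by invoking \citep[Theorem 1]{Gu7270} (itself based on \citep[Theorem 3.17]{benveniste2012adaptive}) under C$1$--C$6$ plus the boundedness and recurrence-in-a-compact-set hypotheses. The only difference is expository: you unpack the Poisson-equation/martingale decomposition that lives inside the cited theorem, whereas the paper states C$1$--C$6$ directly as integrability, ergodicity, and continuity conditions on the kernel and score, supplies the dictionary $\theta=\varparams$, $x=\latent$, $\Pi_\theta=M_\varparams$, $H(\theta,x)=\grad_\varparams\log q(\latent\idxby\varparams)$, and likewise defers the hard verification of the kernel conditions for \gls{CIS}/\gls{CSMC} to external references.
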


\subsection{MSC on Large-Scale Data}\label{sec:bigdata}
If the dataset $\data = (x_1,,\ldots,x_n)$ is large it might be impractical to evaluate the full likelihood at each step and it would be preferable to consider only a subset of the data at each iteration. Variational inference based on the exclusive \gls{KL}, $\KL{q}{p}$, is scalable in the sense that it works by subsampling datasets both for \emph{exchangeable data}, $p(\data) = \mathbb{E}_{p(\latent)}\left[\prod_{i=1}^n p(x_i \given \latent)\right]$, as well as for \emph{independent and identically distributed data} (iid), $p(\data) = \prod_{i=1}^n p(x_i) = \prod_{i=1}^n \mathbb{E}_{p(z_i)}\left[p(x_i \given z_i)\right]$ where $\latent=(z_1,\ldots,z_n)$. For the exclusive \gls{KL} divergence subsampling is straightforward; the likelihood enters as a sum of the individual log-likelihood terms for all datapoints whether the data is iid or exchangeable, and a simple unbiased estimate can be constructed by sampling one (or a few) datapoints to evaluate at each iteration. However, for the inclusive \gls{KL} divergence the large-scale data implications for the two settings are less clear and we discuss each below.

Often in the literature \citep{Bornschein2015,Burda2016,dieng2019reweighted,ou2020joint} applications assumes the data is generated \emph{iid} and achieve scalability through use of subsampling and amortization. In fact, \gls{MSC} can potentially scale just as well as other algorithms to large datasets when data is assumed iid $x_i \sim p(x), ~i=1,\ldots,n$. Instead of minimizing $\KL{p(z\given x)}{q(z\idxby \varparams)}$ wrt $\lambda$ for each $x=x_i$, we consider minimizing $\KL{p(x) p(z|x)}{p(x) q(z\given\varparams_\eta(x))}$ wrt $\eta$ where $\lambda_\eta(x)$ is an inference network (amortization). If $q(z\given\varparams_\eta(x))$ is flexible enough the posterior $p(z\given x)$ is the optimal solution to this minimization problem. Stochastic gradient descent can be performed by noting that
\begin{align*}
&\nabla_\eta \KL{p(x) p(z|x)}{p(x) q(z\given \varparams_\eta(x))} = 0+\mathbb{E}_{p(x) p(z|x)}\left[-\nabla_\eta \log  q(z|\lambda_\eta(x)\right] \\
&\approx \frac{1}{n} \sum_{i=1}^n\mathbb{E}_{p(z|x_i)}\left[-\nabla_\eta \log q(z|\lambda_\eta(x_i))\right],
\end{align*}
where the approximation is directly amenable to data subsampling. We leave the formal study of this approach for future work.

For \emph{exchangeable data} the likelihood enters as a product and subsampling is difficult in general.
Standard \gls{MCMC} kernels require evaluation of the complete likelihood at each iteration, which means that the method proposed in this paper likewise must evaluate all the data points at each iteration of \cref{alg:msa}. An option is to follow \citep{li2016renyi,dieng2017chi} using \emph{subset average likelihoods}. In \cref{app:like} we prove that this approach leads to systematic errors that are difficult to quantify. It does not minimize the inclusive \gls{KL} from $p$ to $q$, rather it minimizes the \gls{KL} divergence from a \emph{perturbed} posterior $\widetilde{p}$ to $q$.
A potential remedy to this issue, that we leave for future work, is to consider approximate \gls{MCMC} (with theoretical guarantees) reviewed in \eg \citep{bardenet2017,angelino2016patterns}.

% EXPERIMENTS
\section{Empirical Evaluation}\label{sec:experiments}
We illustrate convergence on a toy model and demonstrate the utility of \gls{MSC} on Bayesian probit regression for classification as well as a stochastic volatility model for financial data. 
The studies show that \gls{MSC}
\begin{enumerate*}[label=(\roman*)]
\item converges to the true solution whereas the biased methods do not; 
\item achieves similar predictive performance as \gls{EP} and \gls{IS} on regression while being more robust to the choice of sample size $S$; and
\item learns superior or as good stochastic volatility models as \gls{SMC}.
\end{enumerate*}
Code is available at \url{github.com/blei-lab/markovian-score-climbing}.

\subsection{Skew Normal Distribution}
We illustrate the impact of the biased gradients discussed in \cref{sec:bkg:biased} on a toy example. Let $p(\latent\given\data)$ be a scalar skew normal distribution with location, scale and shape parameters $(\xi, \omega, \alpha) = (0.5, 2, 5)$. We let the variational approximation be a family of normal distributions $q(\latent\idxby \varparams) = \Norm(\latent\idxby \mu, \sigma^2)$. For this choice of posterior and approximating family it is possible to compute the analytical solution for the inclusive \gls{KL} divergence; it corresponds to matching the moments of the variational approximation and the posterior distribution. %To compute the true solution for the $\chi^2$ divergence we use the mean of $- \nicefrac{p(\latent,\data)}{q(\latent\idxby \varparams)} \grad_\varparams \log q(\latent\idxby \varparams)$ based on a thousand samples from $p(\latent\given\data)$ at each iteration. 
In \cref{fig:biased} we show the results of \gls{SGD} when using the biased gradients from \cref{eq:sgd:inclkl}, \ie using self-normalized \gls{IS} to estimate the gradients, and \gls{MSC} (this paper) as described in \cref{sec:method}. We set the number of samples to $S=2$. We can see how the biased gradient leads to systematic errors when estimating the variational parameters, whereas \gls{MSC} obtains the true solution. Increasing the number of samples for the estimator in \cref{eq:sgd:inclkl} will lower the bias, and in the limit of infinite samples $S$ it is exact. However, for non-toy problems it is likely very difficult to know what is a sufficient number of samples to get an "acceptable bias" in the \gls{VI} solution. \gls{MSC}, on the other hand, provides consistent estimates of the variational parameters even with small number of samples.% However, the estimator in \cref{eq:sgd:chisquared} is biased even as $S\to \infty$. 
\begin{figure}[ht]
	\centering
	\begin{subfigure}[b]{0.35\textwidth}
        \includegraphics[width=\textwidth]{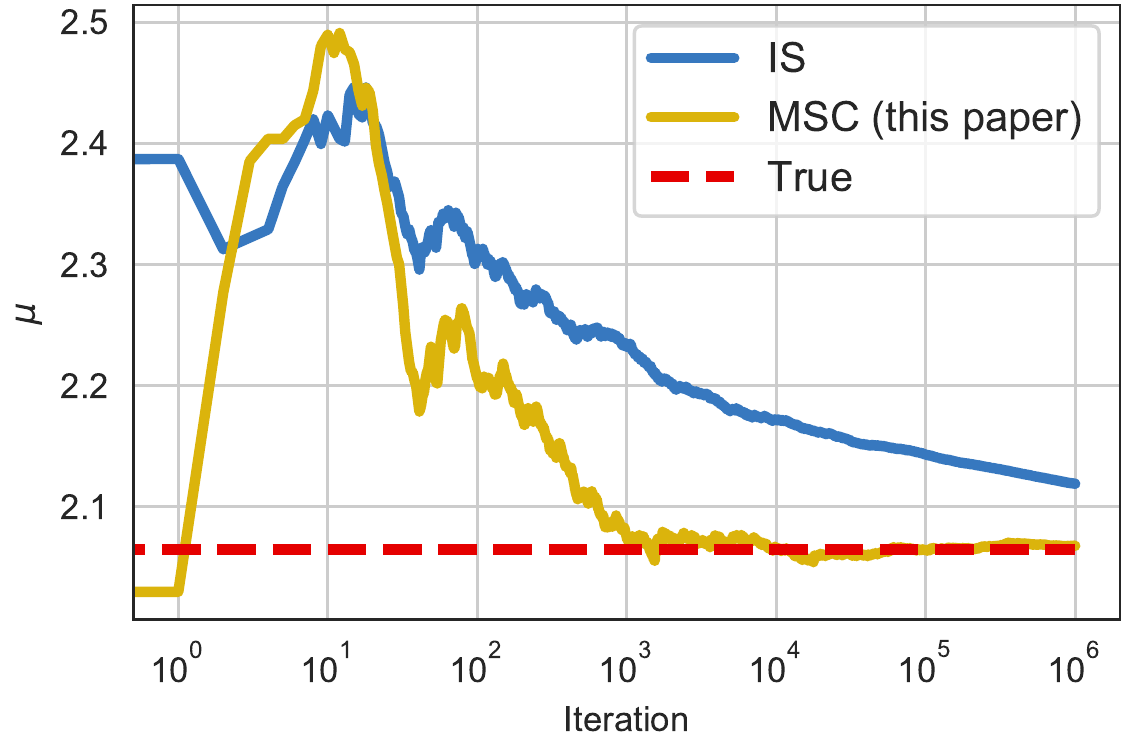}
    \end{subfigure}
    ~ %add desired spacing between images, e. g. ~, \quad, \qquad, \hfill etc. 
      %(or a blank line to force the subfigure onto a new line)
    \begin{subfigure}[b]{0.35\textwidth}
        \includegraphics[width=\textwidth]{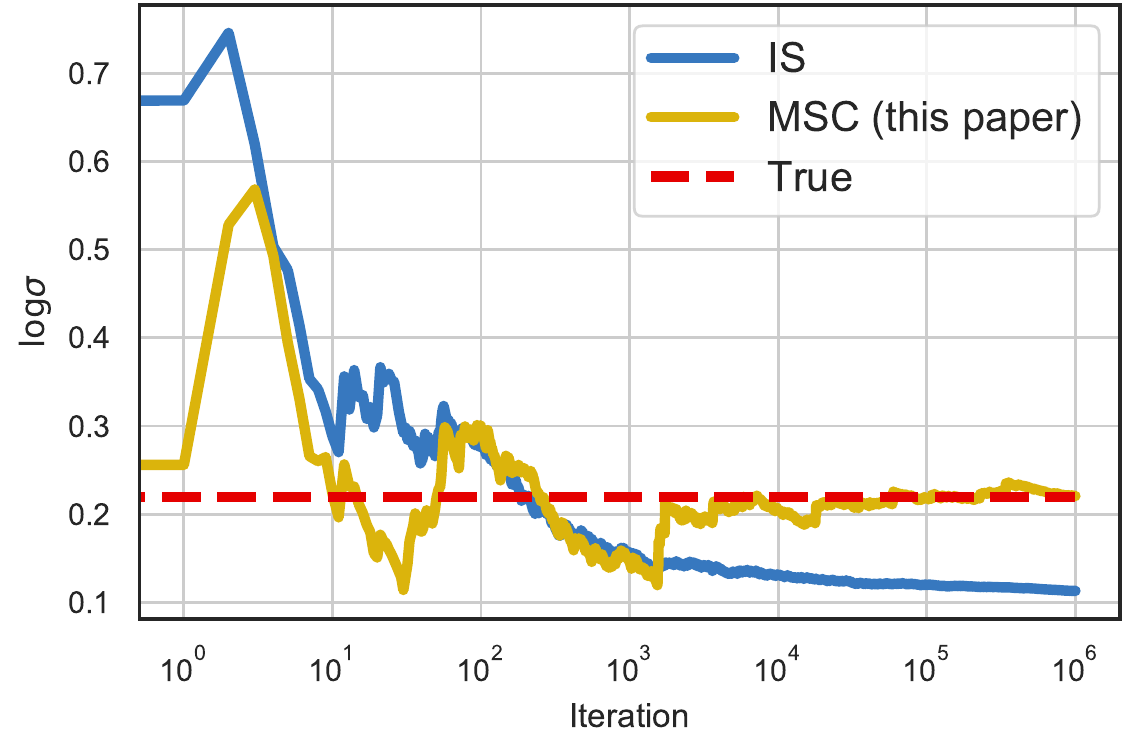}
    \end{subfigure}
	\caption{\Gls{MSC} converges to the true solution, while the biased \gls{IS} approach does not. Example of learnt variational parameters for \gls{IS}- and \gls{MSC}-based gradients of the inclusive \gls{KL}, as well as true parameters. Gaussian approximation to a skew normal distribution. Iterations in log-scale.}\label{fig:biased}
\end{figure}
Note that the biased \gls{IS}-gradients results in an \emph{underestimation} of the variance. One of the main motivations for using inclusive \gls{KL} as optimization objective is to avoid such underestimation of uncertainty. This example shows that when the inclusive \gls{KL} is optimized with \emph{biased gradients} the solution can no longer be trusted in this respect.
The gradients for R\'{e}nyi- and $\chi$ divergences used in \eg \citep{li2016renyi,dieng2017chi} suffer from a similar bias. The supplement provides a $\chi$ divergence analogue to \cref{fig:biased}.
\begin{figure*}[t]
	\centering
	\begin{subfigure}[b]{0.3\textwidth}
        \includegraphics[width=\textwidth]{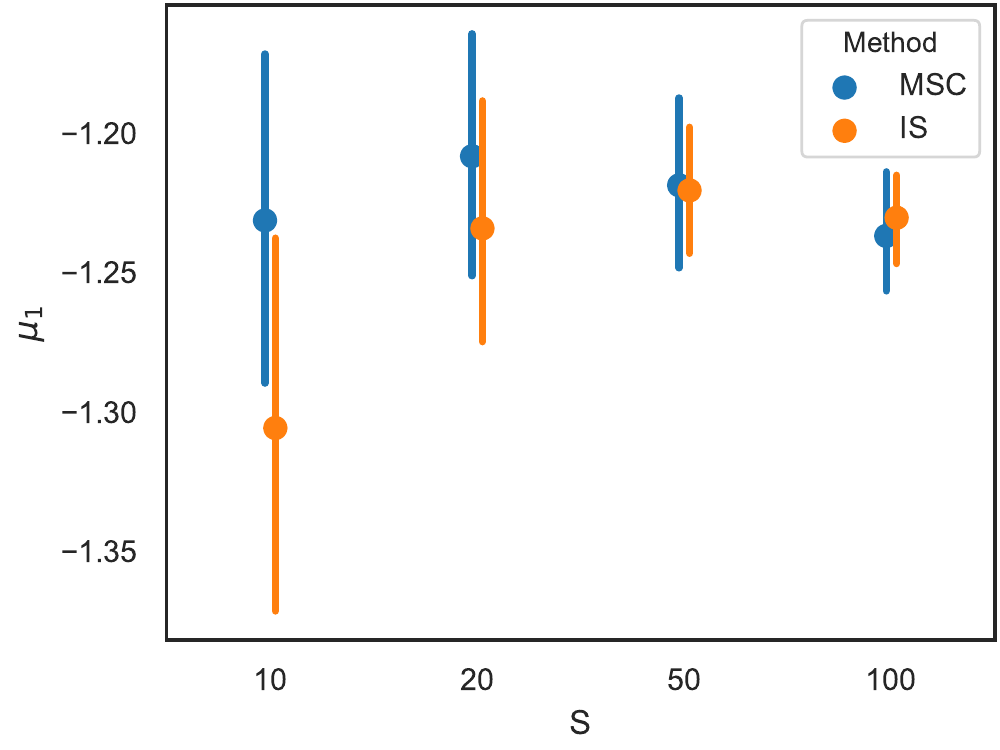}
        \caption{Heart $\mu_1^\star$}
    \end{subfigure}
    ~ %add desired spacing between images, e. g. ~, \quad, \qquad, \hfill etc. 
      %(or a blank line to force the subfigure onto a new line)
    \begin{subfigure}[b]{0.3\textwidth}
        \includegraphics[width=\textwidth]{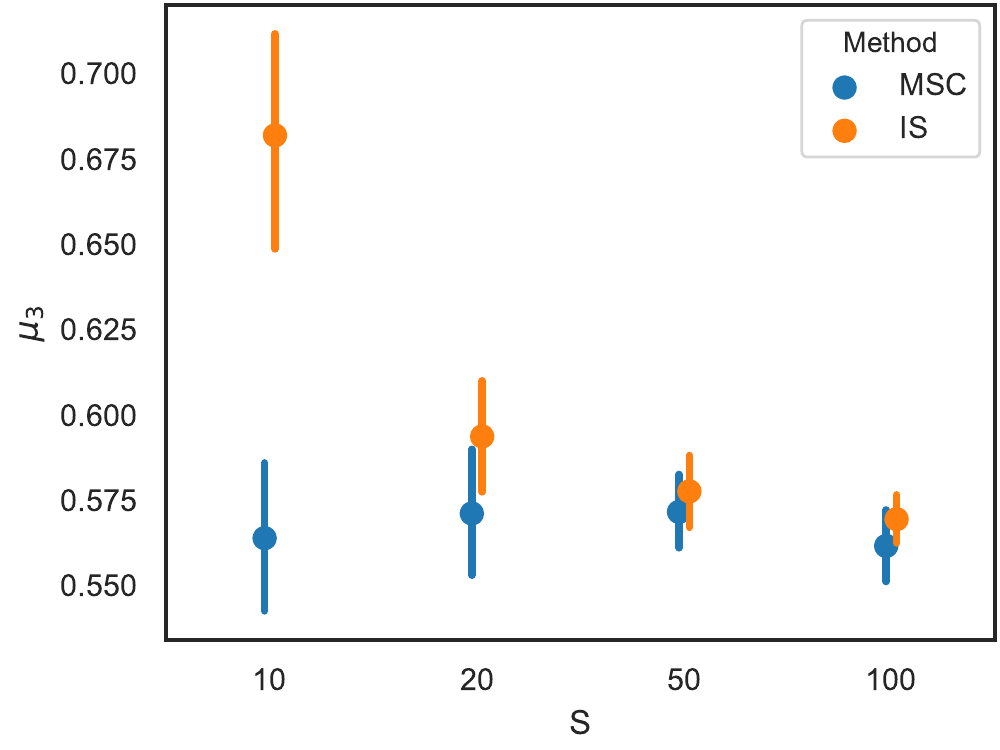}
        \caption{Heart $\mu_3^\star$}
    \end{subfigure}
    ~ %add desired spacing between images, e. g. ~, \quad, \qquad, \hfill etc. 
      %(or a blank line to force the subfigure onto a new line)
    \begin{subfigure}[b]{0.3\textwidth}
        \includegraphics[width=\textwidth]{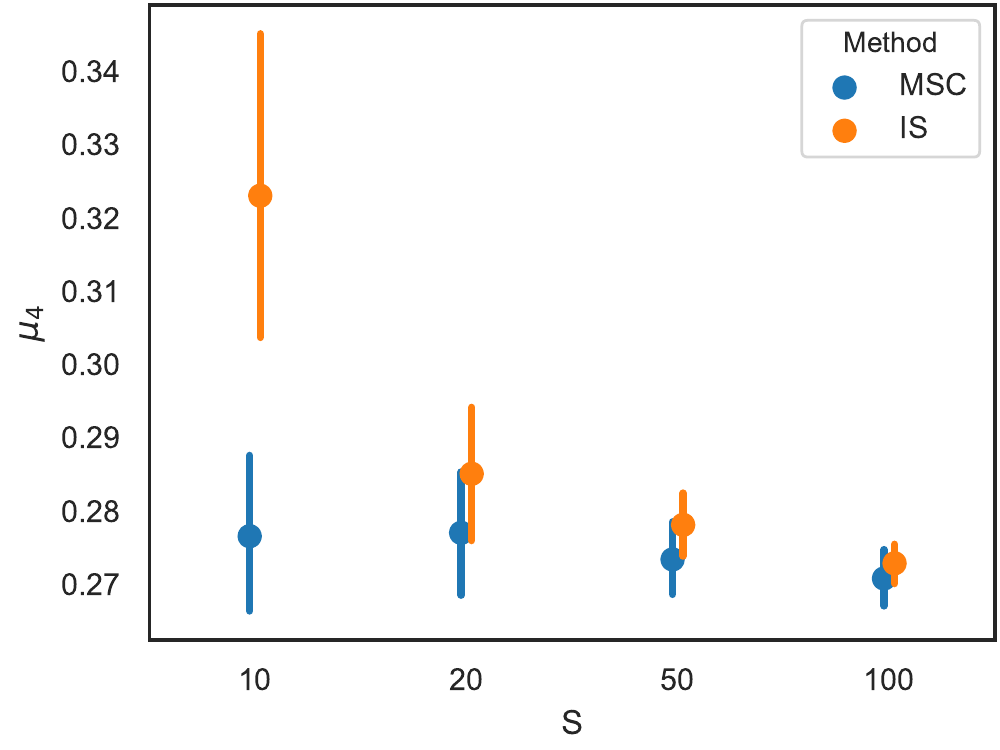}
        \caption{Heart $\mu_4^\star$}
    \end{subfigure}
    
    	\begin{subfigure}[b]{0.3\textwidth}
        \includegraphics[width=\textwidth]{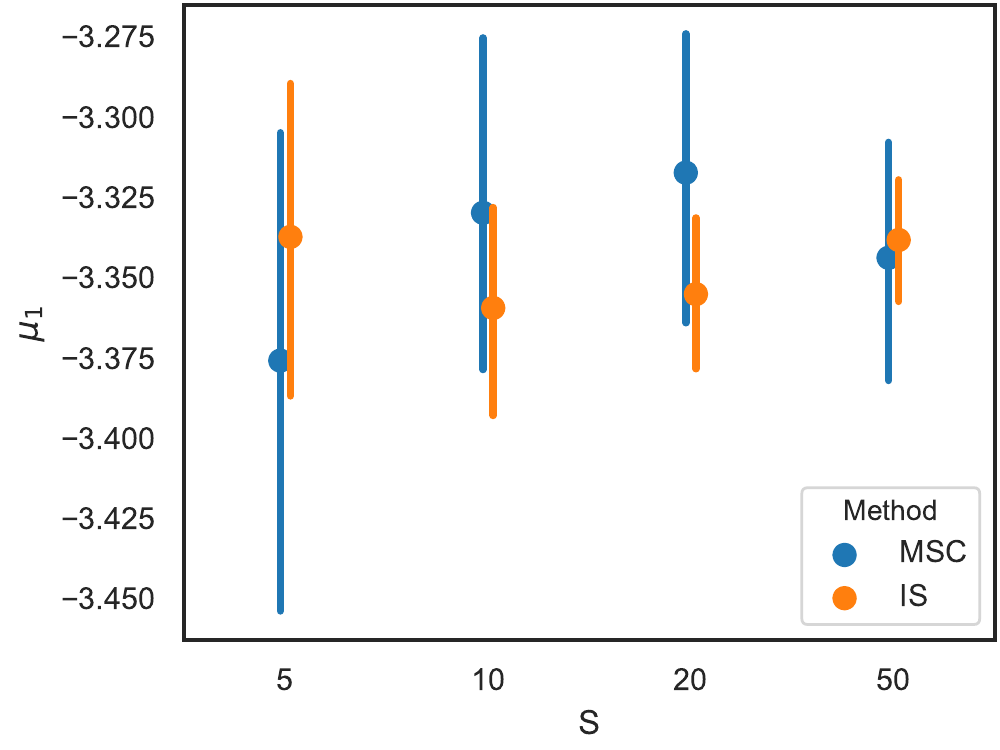}
        \caption{Ionos $\mu_1^\star$}
    \end{subfigure}
    ~ %add desired spacing between images, e. g. ~, \quad, \qquad, \hfill etc. 
      %(or a blank line to force the subfigure onto a new line)
    \begin{subfigure}[b]{0.3\textwidth}
        \includegraphics[width=\textwidth]{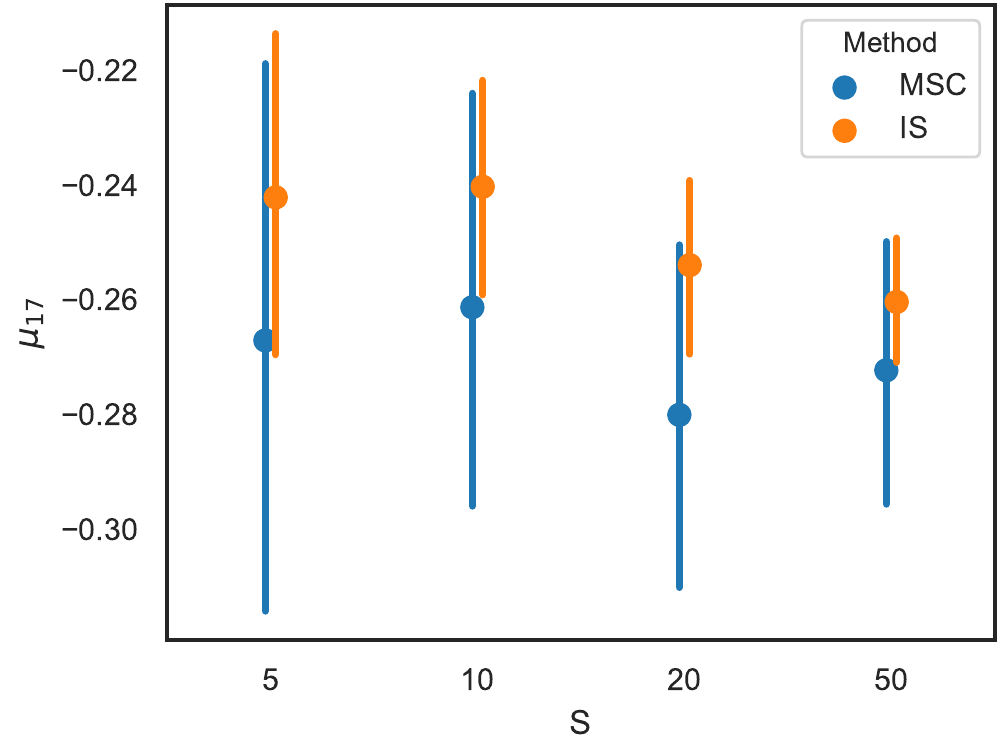}
        \caption{Ionos $\mu_{17}^\star$}
    \end{subfigure}
    ~ %add desired spacing between images, e. g. ~, \quad, \qquad, \hfill etc. 
      %(or a blank line to force the subfigure onto a new line)
    \begin{subfigure}[b]{0.3\textwidth}
        \includegraphics[width=\textwidth]{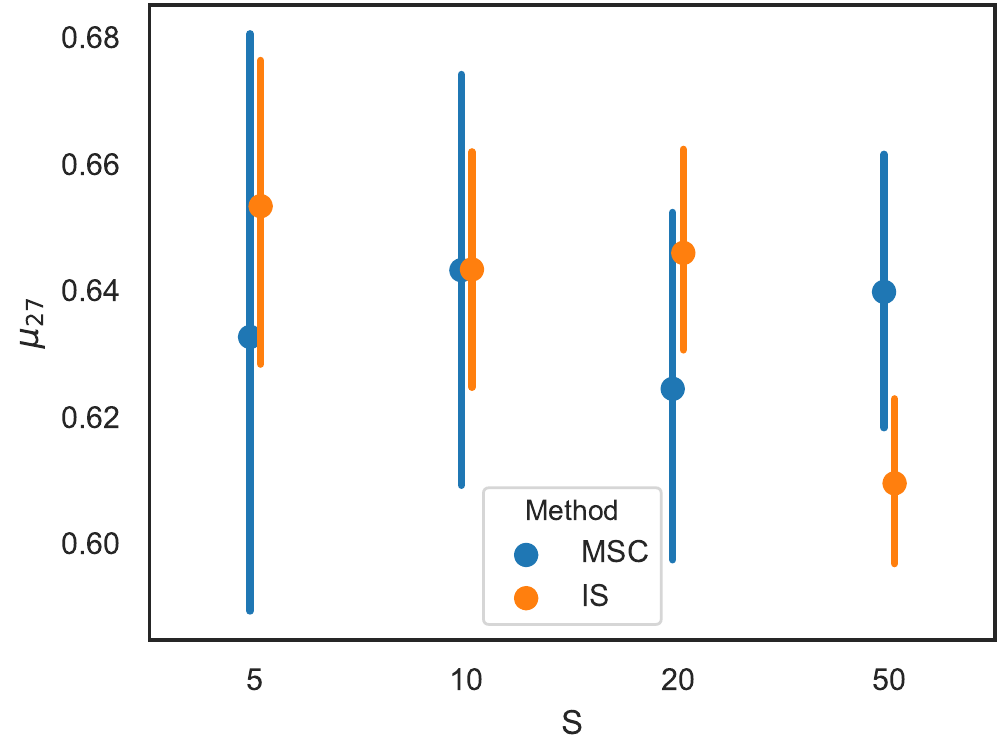}
        \caption{Ionos $\mu_{27}^\star$}
    \end{subfigure}
	\caption{\gls{MSC} is more robust to the number of samples $S$. The fitted mean parameter $\mu^\star$, for three representative dimensions of $\latent$, of \gls{MSC} (this paper) and \gls{IS} (\cf \citep{Bornschein2015}) on the Ionos and Heart datasets. The error bars corresponds to $100$ random initializations.}\label{fig:probit:biased}
	\vspace{-4mm}
\end{figure*}

\subsection{Bayesian Probit Regression}
Probit regression is commonly used for binary classification in machine learning and statistics. The Bayesian probit regression model assigns a Gaussian prior to the parameters. The prior and likelihood are
%\begin{align*}
$p(\latent) = \Norm\left(\latent\idxby 0, I\right)$, $\Prb(\regdata_t = y\given \latent, \data_t) = \Phi(\data_t^\top \latent)^{y}\left(1-\Phi(\data_t^\top \latent)\right)^{1-y}$,
%\end{align*}
where $y \in \{0,1\}$ and $\Phi(\cdot)$ is the cumulative distribution function of the normal distribution. We apply the model for prediction in several UCI datasets \citep{Dua:2019}. We let the variational approximation be a Gaussian distribution $q(\latent\idxby\varparams) = \Norm\left(\latent\idxby \mu, \Sigma\right)$,
where $\Sigma $ is a diagonal covariance matrix. We compare \gls{MSC} (this paper) with the biased \gls{IS}-based approach  (\cf \cref{eq:sgd:inclkl} and \citep{Bornschein2015}) and \gls{EP} \citep{minka2001expectation} that minimizes the inclusive \gls{KL} locally. For \gls{SGD} methods we use adaptive step-sizes \citep{kingma2014adam}.

\Cref{tbl:expts:probit} illustrates the predictive performance of the fitted model on held-out test data. The results where generated by splitting each dataset $100$ times into $90\%$ training and $10\%$ test data, then computing average prediction error and its standard deviation. \gls{MSC} performs as well as \gls{EP} which is particularly well suited to this problem. However, \gls{EP} requires more model-specific derivations and can be difficult to implement when the moment matching subproblem can not be solved in closed form. In these experiments the bias introduced by \gls{IS} does not significantly impact the predictive performance compared to \gls{MSC}.

\defcitealias{Bornschein2015}{B.B. 2015}
\begin{table}[th]
	\centering
	%\resizebox{1.03\columnwidth}{!}{%
	\begin{tabular}{ccccc}
	\hline 
	Dataset & \acrshort{EP} \citep{minka2001expectation} & \acrshort{IS} \citep{Bornschein2015} & \acrshort{MSC} (adaptive) & \acrshort{MSC} (prior)  \\
	\hline
	Pima & $0.227 \pm 0.048$ & $0.229 \pm 0.047$ & $0.227\pm 0.046$ & $0.456\pm 0.093$ \\
	Ionos & $0.115 \pm 0.053$ & $0.115 \pm 0.054$  & $0.117 \pm 0.053$  & $0.182 \pm 0.070$ \\
	Heart & $0.161 \pm 0.066$ & $0.163 \pm 0.066$  & $0.160 \pm 0.063$ & $0.342 \pm 0.11$
	\end{tabular}
	%}
	\caption{Test error for Bayesian probit regression; lower is better. Estimated using  \acrshort{EP} \citep{minka2001expectation}, \acrshort{IS} (\cf \citep{Bornschein2015}), and \acrshort{MSC} (this paper) with proposal $q(\latent\idxby\varparams)$ (adaptive) or $p(\latent)$ (prior) for $3$ UCI datasets. Predictive performance is comparable, but \gls{MSC} is more robust and generically applicable.}\label{tbl:expts:probit}
	\vspace{-4mm}
\end{table}
We compare how the approximations based on \gls{MSC} and \gls{IS} are affected by the number of samples $S$ at each iteration. In \cref{fig:probit:biased} we plot the mean value $\mu^\star$ based on $100$ random initializations for several values of $S$ on the Heart and Ionos datasets. The \gls{MSC} is more robust to the choice of $S$, converging to similar mean values for all the choices of $S$ in this example. For the Heart dataset, \gls{IS} clearly struggles with a bias for low values of the number of samples $S$.

\subsection{Stochastic Volatility}
The stochastic volatility model is commonly used in financial econometrics \citep{Chib2009}.
The model is $p(\latent_0\idxby \modparams) = \Norm\left(\latent_0\idxby 0, \frac{\sigma^2}{1-\phi^2}\right), \quad 
	p(\latent_t\given\latent_{t-1}\idxby \modparams) = \Norm\left(\latent_t\idxby \mu + \phi (\latent_{t-1}-\mu), \sigma^2\right)$, $p(\data_t\given\latent_t\idxby \modparams) = \Norm\left(\data_t\idxby 0, \beta\exp(\latent_t)\right)$,
where the parameters are constrained as follows $\modparams = \left(\sigma^2, \phi, \mu, \beta\right)  \in \reals_+ \times (-1,1)\times \reals \times \reals_+$.  Both the posterior distribution and log-marginal likelihood are intractable so we make use of \cref{alg:msa:modparams} as outlined in \cref{sec:modparams} with the \gls{CSMC} kernel described in the supplement. The proposal distributions are  $q(\latent_0\idxby \modparams, \varparams_0) \propto p(\latent_0\idxby \modparams) e^{-\frac{1}{2}\Lambda_0\latent_0^2 + \nu_0 \latent_0 }$, $q(\latent_t\given\latent_{t-1}\idxby \modparams, \varparams_t) \propto p(\latent_t\given\latent_{t-1}\idxby \modparams) \,e^{-\frac{1}{2}\Lambda_t \latent_t^2 + \nu_t \latent_t }$,
with variational parameters $\varparams_t = (\nu_t, \Lambda_t) \in \reals \times \reals_+$. 
\begin{wrapfigure}{r}{0.45\textwidth}
\vspace{-3mm}
%\begin{figure}[t]
	\centering
	\includegraphics[width=0.35\columnwidth]{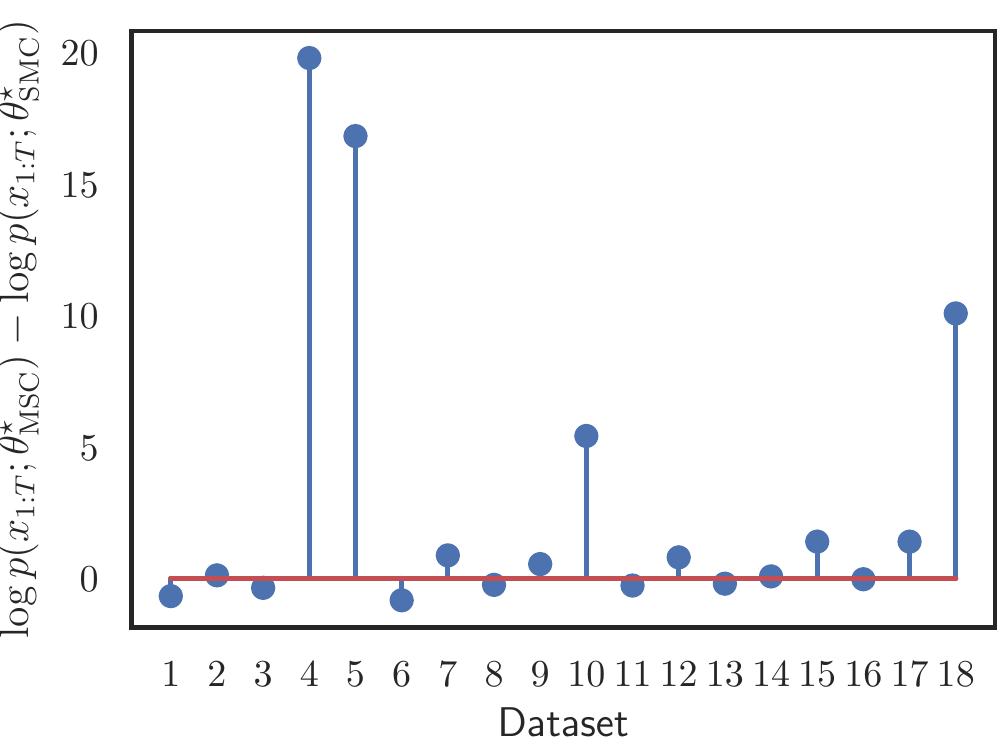}
	\caption{Difference in log-marginal likelihood values for parameters learnt by \gls{MSC} (this paper) and \gls{SMC} \citep{gu2015neural}. The likelihood obtained by \gls{MSC}, on average, is superior to or as good as that obtained by \gls{SMC}.}\label{fig:sv}
%\end{figure}
\vspace{-7mm}
\end{wrapfigure}
We compare \gls{MSC} with the \gls{SMC}-based approach \citep{gu2015neural} using adaptive step-size \citep{kingma2014adam}.
We study monthly returns over $10$ years ($9/2007$ to $8/2017$) for the exchange rate of $18$ currencies with respect to the US dollar. The data is obtained from the Federal Reserve System. In \cref{fig:sv} we illustrate the difference between the log-marginal likelihood obtained by the two methods, $\log p(\data\idxby\modparams_{\textrm{MSC}}^\star)-\log p(\data\idxby\modparams_{\textrm{SMC}}^\star)$. We learn the model and variational  parameters using $S=10$ particles for both methods, and estimate the log-marginal likelihood after convergence using $S=10,000$. The log-marginal likelihood obtained by \gls{MSC} is significantly better than \gls{SMC} for several of the datasets.

\section{Conclusions}\label{sec:conclusion}
%We introduced \gls{MSC}, a new way to reliably learn a variational approximation $q$ to the posterior $p$ that minimizes the inclusive \gls{KL} divergence $\KL{p}{q}$. This results in a method that melds \gls{VI} and \gls{MCMC}. We illustrate its convergence properties on a simple toy example, and study its performance on
%  Bayesian probit regression for classification
  %, deep Markov models to learn the dynamics of simulated spiking neurons, 
%  as well as a stochastic volatility model for financial data.

In \gls{VI}, the properties of the approximation $q$, to the posterior $p$, depends on the choice of divergence that is minimized. The most common choice is the exclusive \gls{KL} divergence $\KL{q}{p}$, which is computationally convenient, but known to suffer from underestimation of the posterior uncertainty. An alternative, which has been our focus here, is the inclusive \gls{KL} divergence $\KL{p}{q}$.
The benefit of using the inclusive \gls{KL} is to obtain a more ``robust'' approximation that does not underestimate the uncertainty. However, in this paper we have argued, and illustrated numerically, that such underestimation of uncertainty can still be an issue, if the optimization is based on \emph{biased} gradient estimates,
as is the case for previously proposed \gls{VI} algorithms.
%as is typically the case.
%the case for all previously proposed \gls{VI} algorithms.  
As a remedy, we introduced \acrlong{MSC}, a new way to reliably learn a variational approximation that minimizes the inclusive \gls{KL}. This results in a method that melds \gls{VI} and \gls{MCMC}. We have illustrated its convergence properties on a simple toy example, and studied its performance on
  Bayesian probit regression for classification 
  as well as a stochastic volatility model for financial data.

% BROADER IMPACT
\section*{Broader Impact}
\Gls{MSC} is a general purpose approximate statistical inference method. The main goal is to remove systematic errors due to biased estimates of the gradient of the optimization objective function. This can allow for more reliable and robust inferences based on the posterior approximation. However, just like other standard inference methods it does not protect from any bias introduced by applying it to specific models and data \citep{corbett2018measure,mehrabi2019survey}.

% Acknowledgements
\begin{ack}
This work is supported by ONR N00014-17-1-2131, ONR N00014-15-1-2209, NIH 1U01MH115727-01, NSF CCF-1740833, DARPA SD2 FA8750-18-C-0130, Amazon, NVIDIA, and the Simons Foundation.
Fredrik Lindsten is financially supported by the Swedish Research Council (project 2016-04278), by the Swedish Foundation for Strategic Research (project ICA16-0015) and by the Wallenberg AI, Autonomous Systems and Software Program (WASP) funded by the Knut and Alice Wallenberg Foundation.
\end{ack}

%\section*{References}

\bibliographystyle{abbrvnat}
\bibliography{references}

\newpage
\appendix
\section{Supplementary Material}\label{sec:supp}
\subsection{Subset Average Likelihood}\label{app:like}
 \citet{li2016renyi,dieng2017chi}, who study different classes of divergences where the likelihood also enters as a product, propose to replace the true likelihood at each iteration with a ``subset average likelihood''. The subset average likelihood approach makes the following approximation
\begin{align*}
	p(\data\given\latent) = \prod_{i=1}^n p(x_i\given\latent) \approx p(\data_M\given\latent)^{\frac{n}{m}} \eqdef  \prod_{j \in M} p(x_{j}\given\latent)^{\frac{n}{m}},
	%\label{eq:avglike:slike}
\end{align*}
where $M \subset \{1,\ldots,n\}$ is a set of indices corresponding to a mini-batch of size $m=|M|$ data points sampled uniformly from $\{x_1, \ldots,x_n\}$ with or without replacement. Considering the same approach for the inclusive \gls{KL} case the \emph{unbiased} stochastic gradient obtained is
\begin{align}
	\frac{1}{S}\sum_{s=1}^S \frac{p(\latent^s) p(\data_M\given\latent^s)^{\frac{n}{m}}}{q(\latent^s\idxby \varparams)} \grad_\varparams \log q(\latent^s\idxby \varparams), ~ \latent^s \sim q(\latent\idxby \varparams).
	\label{eq:avglike:sgd}
\end{align}
This approximation also leads to a systematic error in the \gls{SGD} algorithm. It is no longer minimizing the \gls{KL} divergence from the posterior $p$ to the variational approximation $q$. In fact, it is possible to show that it is actually minimizing the \gls{KL} divergence from a \emph{perturbed posterior} $\widetilde{p}$, where the likelihood is replaced by a mixture of all potential subset average likelihoods, to the variational approximation $q$. This result is formalized by \cref{prop:avglike}.
\begin{prop}
Using the stochastic gradient defined by \cref{eq:avglike:sgd} and an iterative \gls{SGD} algorithm according to \cref{eq:sa:algorithm} the fixed points $\varparams^\star$ are identical to the solution to
\begin{align}
	\nabla_\varparams \mathrm{KL}\left(\widetilde{p}(\latent\given\data) \|  q(\latent\idxby \varparams) \right) = 0,
\end{align}
where the perturbed posterior $\widetilde{p}$, if it exists, is given by
\begin{align*}
	\widetilde{p}(\latent\given\data) &\propto p(\latent) \sum_{M \in \mathcal{M}} p(\data_M\given\latent)^{\frac{n}{m}},
\end{align*}
and $\mathcal{M}$ is the set of all possible combinations of mini-batches $M$ of size $m$.
\label{prop:avglike}
\end{prop}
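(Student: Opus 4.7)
The strategy is to compute the expectation of the stochastic gradient in \cref{eq:avglike:sgd} over the joint randomness of the mini-batch $M$ and the samples $\latent^s \sim q(\latent\idxby\varparams)$, identify the resulting expected gradient as the gradient of a cross entropy against a perturbed target $\widetilde{p}$, and conclude that the fixed points of the corresponding \gls{SGD} recursion coincide with the zeros of $\nabla_\varparams \KL{\widetilde{p}(\latent\given\data)}{q(\latent\idxby\varparams)}$.

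First, I would condition on $M$ and apply the self-normalized importance sampling identity (or rather, the standard unnormalized \gls{IS} identity) to the $\latent^s$-expectation. Using the tower property and the fact that $\latent^s \sim q(\cdot\idxby\varparams)$ are iid, this yields
\begin{align*}
\Exp_{M,\latent^{1:S}}\!\left[\frac{1}{S}\sum_{s=1}^S \frac{p(\latent^s)\,p(\data_M\given\latent^s)^{n/m}}{q(\latent^s\idxby \varparams)}\, \grad_\varparams \log q(\latent^s\idxby \varparams)\right] = \Exp_{M}\!\left[\int p(\latent)\,p(\data_M\given\latent)^{n/m}\, \grad_\varparams \log q(\latent\idxby \varparams)\dif\latent\right].
\end{align*}
Next, because $M$ is drawn uniformly from $\mathcal{M}$, the outer expectation becomes a uniform average over all mini-batches, and Fubini allows me to move the sum inside the integral:
\begin{align*}
\frac{1}{|\mathcal{M}|}\int p(\latent)\!\left(\sum_{M\in\mathcal{M}} p(\data_M\given\latent)^{n/m}\right)\! \grad_\varparams \log q(\latent\idxby \varparams)\dif\latent.
\end{align*}

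I would then assume the normalizer $C(\data) \eqdef \int p(\latent)\sum_{M\in\mathcal{M}} p(\data_M\given\latent)^{n/m}\dif\latent$ is finite (this is the clause ``if it exists'' in the statement) and define $\widetilde{p}(\latent\given\data) \eqdef C(\data)^{-1} p(\latent) \sum_M p(\data_M\given\latent)^{n/m}$. The expected gradient then equals $\tfrac{C(\data)}{|\mathcal{M}|}\,\Exp_{\widetilde{p}(\latent\given\data)}\!\left[\grad_\varparams \log q(\latent\idxby\varparams)\right]$, which is, up to a positive scalar independent of $\varparams$, exactly $-\grad_\varparams \KL{\widetilde{p}(\latent\given\data)}{q(\latent\idxby \varparams)}$ (differentiation under the integral is justified by the same regularity conditions needed for the \gls{SGD} convergence argument already used elsewhere in the paper). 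Hence the mean-field \gls{ODE} of the \gls{SGD} recursion from \cref{eq:sa:algorithm} has vector field proportional to $-\grad_\varparams \KL{\widetilde{p}}{q}$, and any fixed point $\varparams^\star$ must satisfy $\grad_\varparams \KL{\widetilde{p}(\latent\given\data)}{q(\latent\idxby\varparams)}=0$.

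The main obstacle I anticipate is not algebraic but analytic: verifying that $\widetilde{p}$ is a well-defined probability density and that the dominated-convergence conditions required to exchange $\grad_\varparams$ with the integral hold for the variational family at hand. These are the standard side conditions inherited from the \gls{SA} framework invoked for \cref{prop:msa} (Conditions C$1$--C$6$ in the supplement), so the core combinatorial identity, namely that averaging $p(\data_M\given\latent)^{n/m}$ over all mini-batches produces a fixed but data-dependent perturbation of the likelihood, is what does the real work.
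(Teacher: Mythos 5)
Your proposal is correct and follows essentially the same route as the paper's proof: reduce the expectation over the iid samples $\latent^s$ to a single term, average uniformly over mini-batches $M$, recognize the resulting expectation under $q$ of the weighted score as an expectation under the perturbed posterior $\widetilde{p}$ up to a positive constant independent of $\varparams$, and identify the fixed-point equation with $\nabla_\varparams \KL{\widetilde{p}}{q}=0$. The only difference is that you spell out the normalizer $C(\data)$ and the regularity side conditions explicitly, which the paper leaves implicit behind the phrase ``if it exists.''
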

\begin{proof}
See the Supplementary Material.
\end{proof}
In the supplement we provide illustrations on a simulated example. It is in general difficult to determine the magnitude of the error introduced by the subset average likelihood in practical applications. The subset average likelihood approach for R\'{e}nyi and $\chi^2$ divergences \citep{li2016renyi,dieng2017chi} likewise leads to a systematic error in the stochastic gradient. Furthermore, the fixed points of the resulting stochastic systems for these divergences are difficult to quantify, making it even harder to understand the effect of the approximation.

\subsection*{Conditional Sequential Monte Carlo}\label{sec:csmc}
Just like \gls{CIS} is a straightforward modification of \gls{IS}, so is \gls{CSMC} a straightforward modification of \gls{SMC}.
We make use of \gls{CSMC} with ancestor sampling as proposed by \citet{lindsten2014particle} combined with twisted \gls{SMC} \citep{guarniero2017iterated,heng2017controlled,naesseth2019elements}. While \gls{SMC} can be adapted to perform inference for almost any probabilistic model \citep{naesseth2019elements}, we here focus on the state space model
\begin{align*}
	p(\latent_{1:T},\data_{1:T}) &= p(\latent_1) p(\data_1\given\latent_1) \prod_{t=1}^T p(\latent_t\given \latent_{t-1}) p(\data_t\given\latent_t),
\end{align*}
where we assume that the prior $p(\latent_1)$ and transition $p(\latent_t\given \latent_{t-1})$ are conditionally Gaussian. Because the prior and transition distributions are Gaussian it is convenient to define the full approximation to the posterior $p(\latent_{1:T}\given\data_{1:T})$ to be the multivariate normal
\begin{align}
	q(\latent_{1:T} \idxby\varparams) &= q(\latent_{1} \idxby\varparams_1) \prod_{t=2}^T q(\latent_{t}\given \latent_{t-1} \idxby\varparams_t), \\
	q(\latent_{1} \idxby\varparams_1) &\propto p(\latent_1) \psi(\latent_{1} \idxby\varparams_1) \nonumber, \\
	q(\latent_{t}\given \latent_{t-1} \idxby\varparams_t) &\propto p(\latent_t\given \latent_{t-1}) \psi(\latent_{t}  \idxby\varparams_t) \nonumber,
\end{align}
where $\psi$ are \emph{twisting potentials}
\begin{align*}
	\psi(\latent_{t}  \idxby\varparams_t) &= \exp\left(-\frac{1}{2}\latent_t^\top \Lambda_t \latent_t + \nu_t^\top \latent_t \right),
\end{align*}
with $\varparams_t = (\Lambda_t,\nu_t)$. We are now equipped to explain the \gls{CSMC} kernel that updates a conditional trajectory $\latent_{1:T}[k-1] = (\latent_1[k-1],\ldots \latent_T[k-1])$. Each iteration of \gls{CSMC} consists of three steps: initialization for $t=1$, running a modified \gls{SMC} algorithm for $t>1$, and then updating the conditional sample for the next iteration. We explain in detail below.

First, perform (conditional) \gls{IS} for the first step where $t=1$. Set $\latent_1^1 = \latent_1[k-1]$ and propose the remaining $S-1$ samples from a proposal distribution $q$
\begin{align*}
	\latent_1^i &\sim q(\latent_1\idxby\varparams_1), \quad i=2,\ldots,S
\end{align*}
and compute the importance weights for $i=1,\ldots,S$
\begin{align*}
	w_1^i &= \frac{p(\latent_1^i) \psi(\latent_{1}^i  \idxby\varparams_1)}{q(\latent_1^i\idxby\varparams_1)}, &\bar{w}_1^i = \frac{w_1^i}{\sum_{j=1}^S w_1^j}.
\end{align*}

Then, for each step $t>1$ in turn perform \emph{resampling}, \emph{ancestor sampling}, \emph{propagation} and \emph{weighting}. Resampling picks the most promising earlier sample to propagate, \ie for $i=2,\ldots,S$ simulate \emph{ancestor} variables $a_{t-1}^i$ with probability
\begin{align*}
	\Prb\left(a_{t-1}^i =j \right) &= \bar{w}_{t-1}^j.
\end{align*}
For $i=1$ instead, simulate the corresponding ancestor variable $a_{t-1}^1$ with probability
\begin{align*}
	\Prb\left(a_{t-1}^1 =j \right) &\propto \bar{w}_{t-1}^j q(\latent_t[k-1]\given \latent_{t-1}^j\idxby \varparams_t),
\end{align*}
where $\latent_t[k-1]$ is the corresponding element of the conditional trajectory $\latent_{1:T}$ from the previous iteration. This is known as ancestor sampling \citep{lindsten2014particle}.

When propagating for $i=1$ simply set $\latent_t^1=\latent_t[k-1]$, and simulate the remainder from the proposal distribution
\begin{align*}
	\latent_t^i &\sim q(\latent_t\given\latent_{t-1}^{a_{t-1}^i}\idxby\varparams), \quad i=2,\ldots,S
\end{align*}
Set $\latent_{1:t}^i = (\latent_{1:t-1}^{a_{t-1}^i}, \latent_t^i)$ and compute the weights for all $i=1,\ldots,S$
\begin{align}
	w_t^i &= \frac{p(\latent_t^i\given \latent_{t-1}^{a_{t-1}^i})p(\data_{t-1}\given\latent_{t-1}^{a_{t-1}^i})}{q(\latent_t^i\given \latent_{t-1}^{a_{t-1}^i}\idxby\varparams_t)} \frac{\psi(\latent_{t}^i  \idxby\varparams_t)}{\psi(\latent_{t-1}^{a_{t-1}^i}  \idxby\varparams_{t-1})}, \label{eq:weights:smc} \\
	\bar{w}_t^i &= \frac{w_t^i}{\sum_{j=1}^S w_t^j}. \nonumber
\end{align}
For the final step $t=T$ the (unnormalized) weights are instead
\begin{align}
	w_T^i &= \frac{p(\latent_T^i\given \latent_{T-1}^{a_{T-1}^i})p(\data_{T-1}\given\latent_{T-1}^{a_{T-1}^i})}{q(\latent_T^i\given \latent_{T-1}^{a_{T-1}^i}\idxby\varparams_T)} \frac{p(\data_{T}\given\latent_{T}^{i})}{\psi(\latent_{t-1}^{a_{t-1}^i}  \idxby\varparams_{t-1})}. 
	\label{eq:weights:smc:last}
\end{align}

Finally, an updated conditional sample is generated by picking one of the proposed trajectories with probability proportional to its (normalized) weight, \ie
\begin{align*}
	\latent_{1:T}[k] &= \latent_{1:T}^J, \quad 
\end{align*}
where $J$ is a discrete random variable with probability $\Prb(J=j) = \bar{w}_T^j$.

Repeating this procedure iteratively constructs a Markov chain with the posterior $p(\latent_{1:T}\given\data_{1:T})$ as its stationary distribution \citep{andrieu2010particle,lindsten2014particle,naesseth2019elements}. With this it is possible to attain an estimate of the gradient with respect to the variational parameters of \cref{eq:kloptproblem} as follows
\begin{align}
	\widehat{g}_{\textrm{KL}}(\varparams) &= -\grad_{\varparams} \log q(\latent_{1:T}[k]\idxby\varparams),
	\label{eq:kloptproblem:cisgradient}
\end{align}
where $\latent_{1:T}[k]$ is the conditional sample retained at iteration $k$ of the \gls{CSMC} algorithm. 

We summarize one full iteration of the \gls{CSMC} algorithm in \cref{alg:csmc}. This algorithm defines a Markov kernel $M(\latent_{1:T}[k]\given \latent_{1:T}[k-1]\idxby\varparams)$ useful for \gls{MSC}.
\begin{algorithm}
\DontPrintSemicolon
\SetKwInOut{Input}{Input}\SetKwInOut{Output}{Output}
 \Input{Model $p(\latent_{1:T},\data_{1:T})$, proposal $q(\latent_{1:T}\idxby\varparams)$, conditional sample $\latent_{1:T}[k-1]$, and total number of internal samples $S$.}
 \Output{$\latent_{1:T}[k] \sim M(\cdot\given \latent_{1:T}[k-1]\idxby\varparams)$, updated conditional sample.}
 \BlankLine
 Set $\displaystyle \latent_1^1 = \latent_1[k]$\;
 
 Sample $\displaystyle \latent_1^i \sim q(\latent_1\idxby\varparams_1)$ for $\displaystyle i=2,\ldots,S$\;
 
 Compute $\displaystyle w_1^i = \frac{p(\latent_1^i)\psi(\latent_{1}^i  \idxby\varparams_1)}{q(\latent_1^i\idxby\varparams_1)}$ for $\displaystyle i=1,\ldots,S$ \;
 
 \For{$\displaystyle t=2,\ldots,T$}{
 	\For{$\displaystyle i=2,\ldots,S$}{
    	Sample $a_{t-1}^i$ \wip $\Prb(a_{t-1}^i=j) = \bar{w}_{t-1}^j$\;
    	
    	Sample $\latent_t^i \sim q(\latent_t\given\latent_{t-1}^{a_{t-1}^i}\idxby \varparams_t)$\;
    }
    Sample $a_{t-1}^1$ \wip \\$\Prb(a_{t-1}^i=j) \propto \bar{w}_{t-1}^j q(\latent_t[k]\given\latent_{t-1}^{j}\idxby \varparams_t)$\;
    
    Set $\latent_t^1 = \latent_t[k]$\;
    
    \For{$\displaystyle i=1,\ldots,S$}{
    	Compute $w_t^i$ in \cref{eq:weights:smc} or \cref{eq:weights:smc:last}\;
    	
    	Set $\latent_{1:t}^i = \left(\latent_{1:t-1}^{a_{t-1}^i}, \latent_t^i\right)$
    }
} 
 
Sample $\displaystyle J$ with probability $\displaystyle \Prb(J=j) \propto  \bar{w}_T^j$\;
 
 Set $\displaystyle \latent_{1:T}[k] = \latent_{1:T}^J$\;
 \caption{Conditional Sequential Monte Carlo}\label{alg:csmc}
\end{algorithm}

\subsection*{Proof of Proposition 1}
This result is an adaptation of \citet[Theorem 1]{Gu7270} based on \citet[Theorem 3.17, page 304]{benveniste2012adaptive}. 
Let $\varparams^\star$ be a minimizer of the inclusive \gls{KL} divergence in \cref{eq:kloptproblem}. Consider the \gls{ODE} defined by
\begin{align}
	\frac{\dif}{\dif t}\varparams(t) &= \Exp_{p(\latent\given\data)}\left[-s(\latent\idxby\varparams(t))\right], \quad \varparams(0) = \varparams_0,
	\label{eq:ode}
\end{align}
and its solution $\varparams(t)$, $t\geq 0$. If the \gls{ODE} in \cref{eq:ode} admits the unique solution $\varparams(t) = \widehat{\varparams}$, $t \geq 0$ for $\varparams(0)=\widehat{\varparams}$, then $\widehat{\varparams}$ is called a stability point. The minimzer $\varparams^\star$ is a stability point of \cref{eq:ode}. A set $\Lambda$ is called the domain of attraction of $\widehat{\varparams}$, if the solution to \cref{eq:ode} for $\varparams(0)\in \Lambda$ remains in $\Lambda$ and converges to $\widehat{\varparams}$. 
Suppose that $\varparams_k \in \reals^d$ and that $\Lambda$ is an open set in $\reals^{\dimvp}$. Furthermore, suppose $\latent[k]\in \reals^{\dimlv}$ and that $Z$ is an open set in $\reals^{\dimlv}$. Denote the Markov kernel in \gls{MSC}, \cref{alg:msa}, by $M_\varparams(\latent, \dif\latent')$ and repeated application of it by $M_\varparams^k(\latent, \dif\latent') = \int \cdots \int M_\varparams(\latent, \dif\latent_1) M_\varparams(\latent_2, \dif\latent_3) \cdots M_\varparams(\latent_{k-1}, \dif\latent')$. $|\latent|$ denotes the length of the vector $\latent$. Let $Q$ be any compact subset of $\Lambda$, and $q>1$ a sufficiently large real number such that the following assumptions hold. We follow \citet{Gu7270} and assume:
\begin{cond}
Assume that the step size sequence satisfies $\sum_{k=1}^\infty \step_k = \infty$ and $\sum_{k=1}^\infty \step_k^2 < \infty$.
\label{cond:1}
\end{cond}
\begin{cond}[Integrability]
There exists a constant $C_1$ such that for any $\varparams\in\Lambda$, $\latent \in Z$ and $k\geq 1$,
\begin{align*}
	\int \left(1+ |\latent'|^q\right)M_\varparams^k(\latent, \dif\latent') \leq C_1 \left(1+ |\latent|^q\right)
\end{align*}
%	\int \left(1+|\latent'|^q\right) M_\varparams^k(\latent, \dif\latent') \leq C_1 \left( 1+|\latent|^q \right)
%\end{align*}
\label{cond:2}
\end{cond}
\begin{cond}[Convergence of the Markov Chain]
Let $p(\latent\given\data)$ be the unique invariant measure for $M_\varparams$. For each $\varparams \in \Lambda$,
\begin{align*}
	\lim_{k\to\infty} \sup_{\latent\in Z} \frac{1}{1+|\latent|^q} \int  \left(1+ |\latent'|^q\right) |M_\varparams^k(\latent, \dif\latent') - p(\dif\latent'\given\data)| = 0.
\end{align*}
\label{cond:3}
\end{cond}
\begin{cond}[Continuity in $\varparams$]
There exists a constant $C_2$, such that for all $\varparams, \varparams' \in Q$
\begin{align*}
	\left| \int \left(1+|\latent'|^q\right) \left(M_\varparams(\latent, \dif\latent')-M_{\varparams'}(\latent, \dif\latent')\right)\right| \leq C_2 |\varparams-\varparams'|\left(1+|\latent|^q\right).
\end{align*}
\label{cond:4}
\end{cond}
\begin{cond}[Continuity in $\latent$]
There exists a constant $C_3$, such that for all $\latent_1, \latent_2 \in Z$
\begin{align*}
	\sup_{\varparams \in \Lambda} \left| \int \left(1+|\latent'|^{q+1}\right) \left(M_\varparams(\latent_1, \dif\latent')-M_{\varparams}(\latent_2, \dif\latent')\right)\right| \leq C_3 |\latent_1-\latent_2|\left(1+|\latent_1|^q+|\latent_2|^q\right).
\end{align*}
\label{cond:5}
\end{cond}
\begin{cond}[Conditions on the Score Function]
For any compact subset $Q \subset \Lambda$, there exist positive constants $p$, $K_1$, $K_2$, $K_3$ and $\nu > \nicefrac{1}{2}$ such that for all $\varparams, \varparams'\in\Lambda$ and $\latent, \latent_1, \latent_2 \in Z$,
\begin{align*}
	|\grad_\varparams \log q(\latent\idxby\varparams)| &\leq K_1 \left(1+|\latent|^{p+1}\right), \\
	|\grad_\varparams \log q(\latent_1\idxby\varparams) - \grad_\varparams \log q(\latent_2\idxby\varparams)| &\leq K_2 |\latent_1-\latent_2|\left(1+|\latent_1|^{p}+|\latent_2|^{p}\right), \\
	|\grad_\varparams \log q(\latent\idxby\varparams) - \grad_\varparams \log q(\latent\idxby\varparams')| &\leq K_3 |\varparams-\varparams'|^\nu\left(1+|\latent|^{p+1}\right). 
\end{align*}
\label{cond:6}
\end{cond}
The constants $C_1, \ldots, C_3$ and $\nu$ may depend on the compact set $Q$ and the real number $q$. \Cref{cond:1} is the standard Robbins-Monro condition and \Cref{cond:6} controls the regularity of the model.  \Cref{cond:2}-\Cref{cond:5}  have to do with the convergence and continuity of the Markov kernel. These conditions can be difficult to verify in the general case, but can be proven more easily under the simplifying assumption that Z is compact. See \citet[Appendix B]{lindholm2019learning} for a proof of continuity of the \gls{CSMC} kernel, which can also be adapted to the \gls{CIS} kernel.

With the above assumptions the result follows from \citet[Theorem 1]{Gu7270} where (left - their notation, right - our notation)
\begin{align*}
	\theta &= \varparams,\\
	x &= \latent, \\
	\Pi_\theta &= M_\varparams, \\
	H(\theta,x) &=\grad_\varparams \log q(\latent\idxby\varparams),
\end{align*}
and $I(\theta,x) =0$,  $\Gamma_k = 0$.

% SUBSET AVERAGE LIKELIHOOD
\subsection*{Proof of Proposition 2}
The fixed points of the iterative algorithm are the solutions to the equation when we set the expectation of \cref{eq:avglike:sgd} equal to zero. The equation is given by
	\begin{align*}
		&\Exp\left[-\frac{1}{S}\sum_{s=1}^S \frac{p(\latent^s) p(\data_M\given\latent^s)^{\frac{n}{m}}}{q(\latent^s\idxby \varparams)} \grad_\varparams \log q(\latent^s\idxby \varparams)\right] 
		=
		\Exp\left[- \frac{p(\latent) p(\data_M\given\latent)^{\frac{n}{m}}}{q(\latent\idxby \varparams)} \grad_\varparams \log q(\latent\idxby \varparams)\right] \\
		&= \Exp\left[- \frac{p(\latent) \frac{1}{|\mathcal{M}|} \sum_{M\in\mathcal{M}}p(\data_M\given\latent)^{\frac{n}{m}}}{q(\latent\idxby \varparams)} \grad_\varparams \log q(\latent\idxby \varparams)\right] = 0 \\
		&\Longleftrightarrow \Exp_{\widetilde{p}(\latent\given\data) }\left[-\grad_\varparams \log q(\latent\idxby \varparams)\right]  = 0 \\
		 &\Longleftrightarrow \nabla_\varparams \mathrm{KL}\left(\widetilde{p}(\latent\given\data) \|  q(\latent\idxby \varparams) \right) = 0,
	\end{align*}
	where the first equality follows because the samples $\latent^s$ are independent and identically distributed. The second equality follows by the distribution of the mini-batches. The first equivalence follows because $\latent \sim q(\latent \idxby\varparams)$ and we multiply both sides by a constant independent of $\varparams$. The final equivalence follows because $\widetilde{p}(\latent\given\data)$ does not depend on $\varparams$. This concludes the proof.
	
\subsection*{Additional Results Bayesian Probit Regression}
We also compare the posterior uncertainty learnt using \gls{MSC} and \gls{IS}. \Cref{fig:probit} shows difference in the log-standard deviation between the posterior approximation learnt using \gls{MSC} and that using \gls{IS}, \ie $\log \sigma_{\mathrm{MSC}}^\star - \log \sigma_{\mathrm{IS}}^\star$. The figure contains one boxplot for each dimension of the latent variable and is based on data from $100$ random train-test splits. We can see that for two of the datasets, Heart and Ionos, \gls{MSC} on average learns a posterior approximation with higher uncertainty. However, for the Pima dataset the \gls{IS}-based method tends to learn higher variance approximations.
\begin{figure*}[t]
	\centering
	\begin{subfigure}[b]{0.32\textwidth}
        \includegraphics[width=\textwidth]{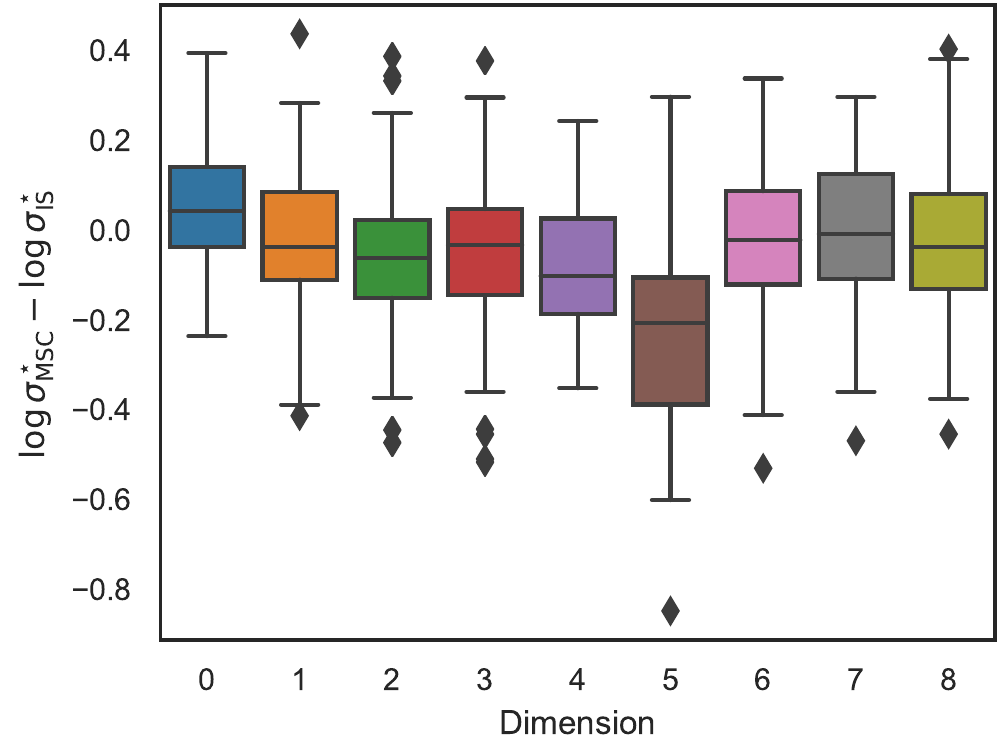}
        \caption{Pima}
    \end{subfigure}
    ~ %add desired spacing between images, e. g. ~, \quad, \qquad, \hfill etc. 
      %(or a blank line to force the subfigure onto a new line)
    \begin{subfigure}[b]{0.32\textwidth}
        \includegraphics[width=\textwidth]{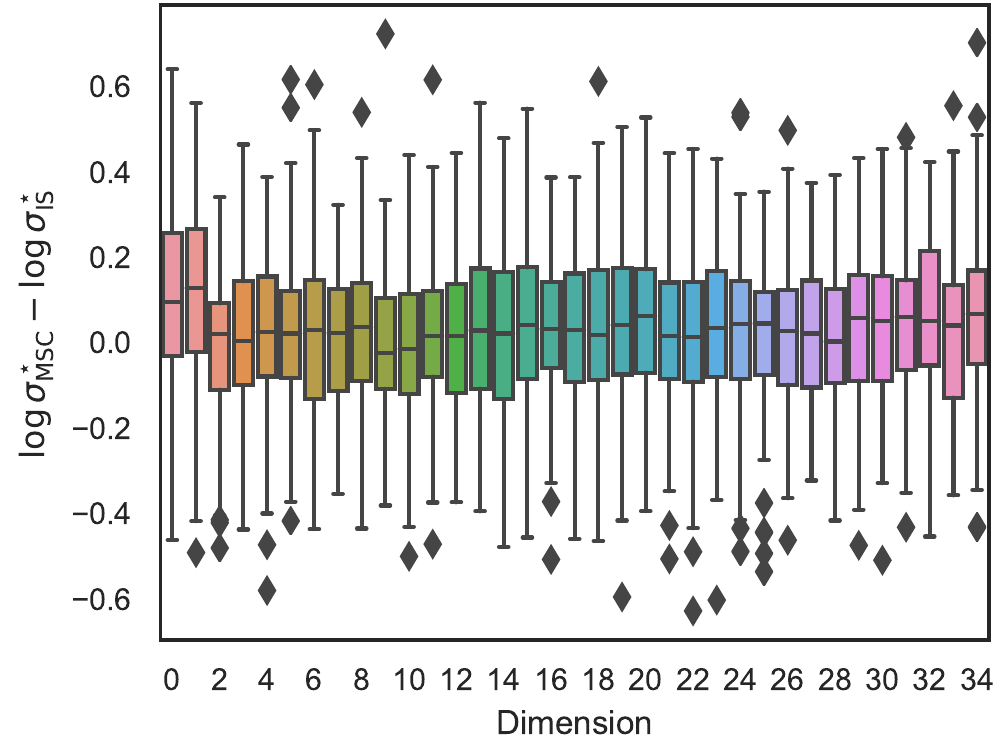}
        \caption{Ionos}
    \end{subfigure}
    ~ %add desired spacing between images, e. g. ~, \quad, \qquad, \hfill etc. 
      %(or a blank line to force the subfigure onto a new line)
    \begin{subfigure}[b]{0.32\textwidth}
        \includegraphics[width=\textwidth]{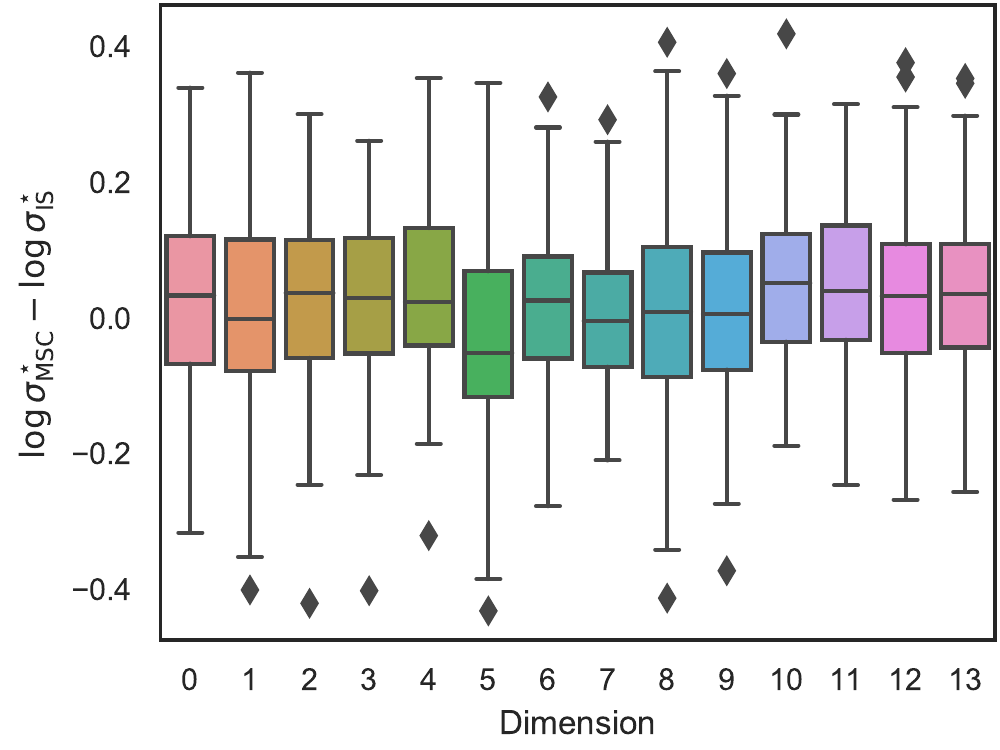}
        \caption{Heart}
    \end{subfigure}
	\caption{The difference in log-standard deviation of the variational approximation, $\log \sigma_{\mathrm{MSC}}^\star - \log \sigma_{\mathrm{IS}}^\star$, between parameters learnt using \gls{MSC} (this paper) and \gls{IS} (\cf \citep{Bornschein2015}). The dimension of the latent variable is plotted versus the parameters learnt from $100$ random splits.}\label{fig:probit}
\end{figure*}

\subsection*{Additional Results Subset Average Likelihoods}
We illustrate the difference between the true and perturbed posteriors in \cref{fig:supp:avg_likelihood} for a toy example where the two distributions can be computed exactly. The model is an unknown mean measured in Gaussian noise with a conjugate prior, \ie $\latent \sim \Norm(0,1)$, $x_i\sim \Norm(\latent, 1)$. To be able to exactly compute the perturbed posterior we keep the number of data points small $n=10$. The figure shows the true and perturbed posteriors for two randomly generated datasets with $m=2,5, 9$. 
\begin{figure}[h]
	\centering
	\begin{subfigure}[b]{0.48\textwidth}
        \includegraphics[width=\textwidth]{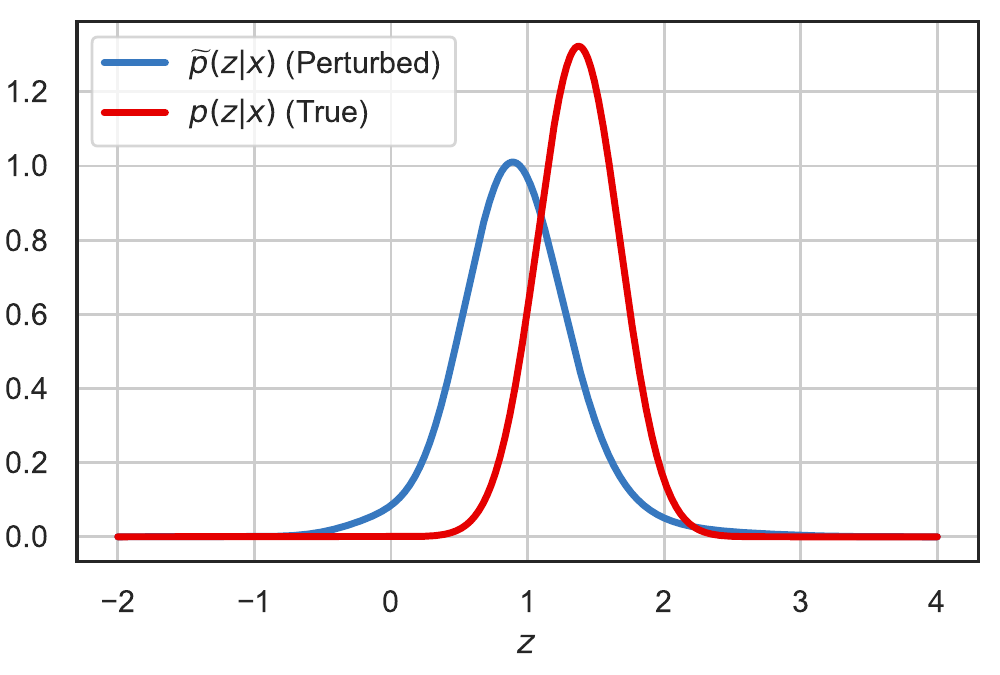}
    \end{subfigure}
    ~ %add desired spacing between images, e. g. ~, \quad, \qquad, \hfill etc. 
      %(or a blank line to force the subfigure onto a new line)
    \begin{subfigure}[b]{0.48\textwidth}
        \includegraphics[width=\textwidth]{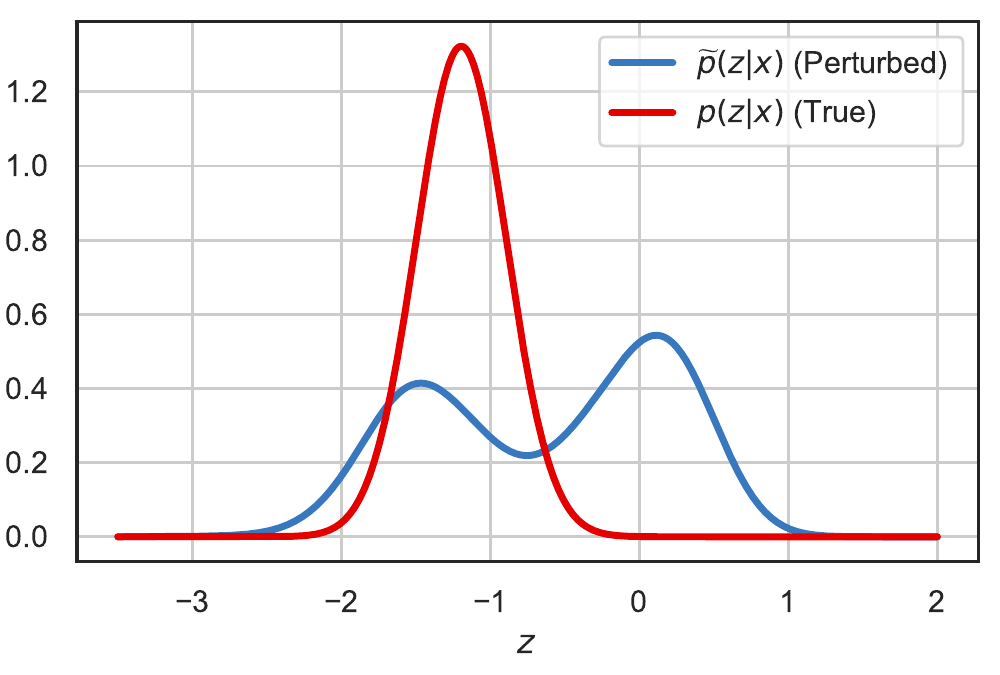}
    \end{subfigure}
    
	\begin{subfigure}[b]{0.48\textwidth}
        \includegraphics[width=\textwidth]{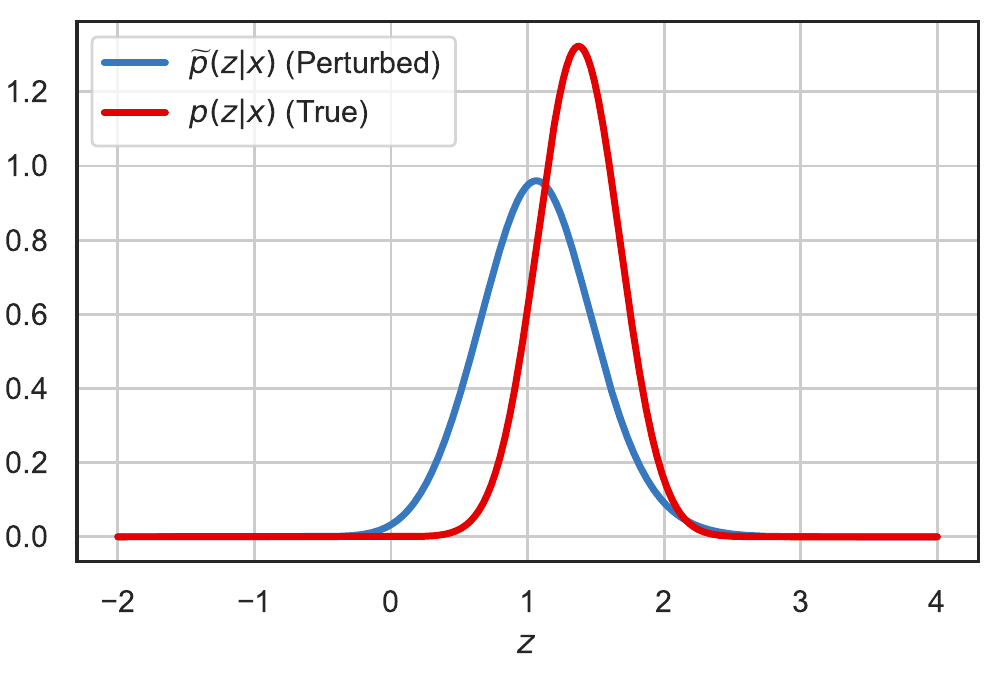}
    \end{subfigure}
    ~ %add desired spacing between images, e. g. ~, \quad, \qquad, \hfill etc. 
      %(or a blank line to force the subfigure onto a new line)
    \begin{subfigure}[b]{0.48\textwidth}
        \includegraphics[width=\textwidth]{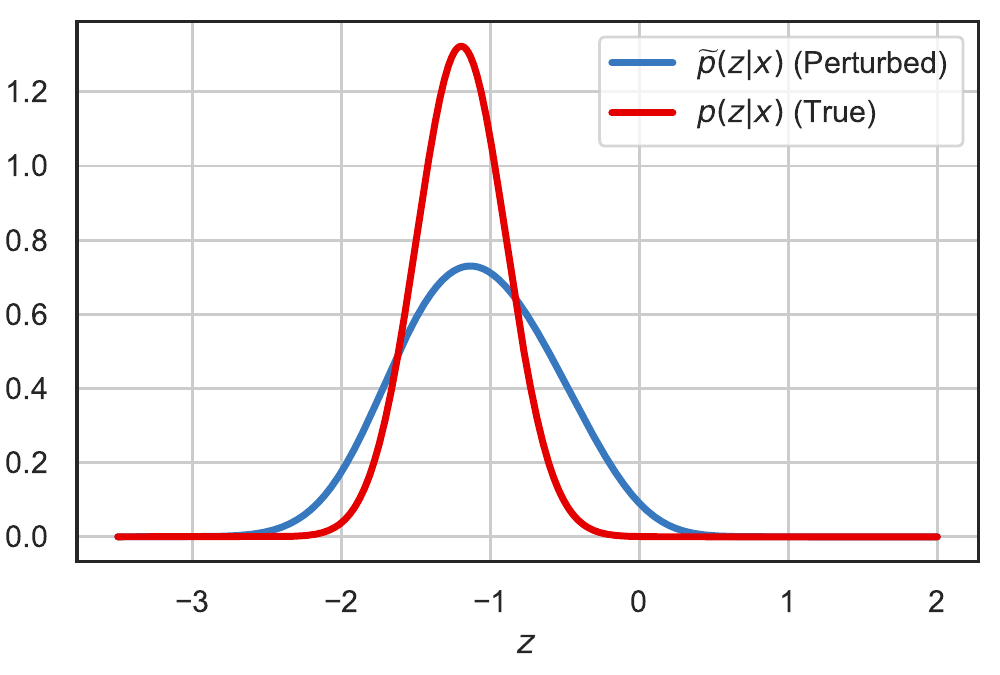}
    \end{subfigure}
    
    \begin{subfigure}[b]{0.48\textwidth}
        \includegraphics[width=\textwidth]{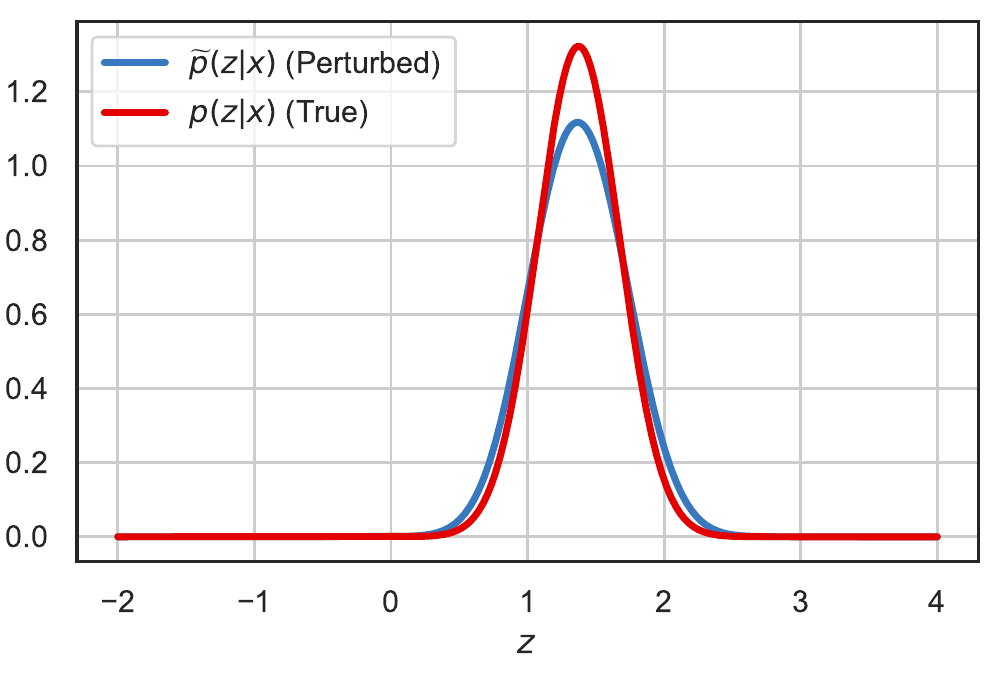}
    \end{subfigure}
    ~ %add desired spacing between images, e. g. ~, \quad, \qquad, \hfill etc. 
      %(or a blank line to force the subfigure onto a new line)
    \begin{subfigure}[b]{0.48\textwidth}
        \includegraphics[width=\textwidth]{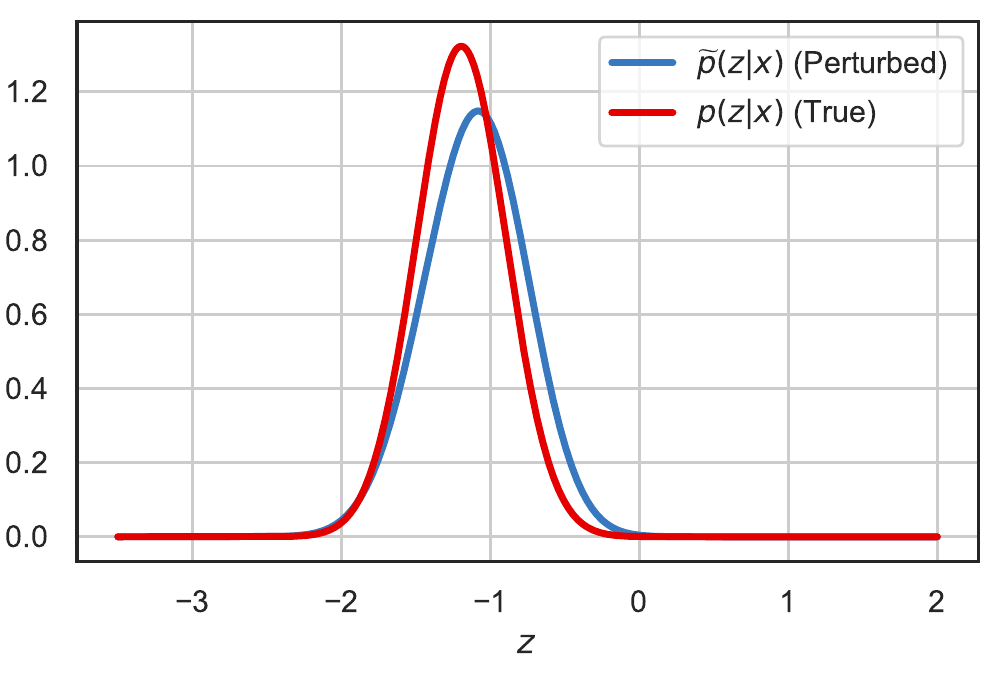}
    \end{subfigure}
	\caption{Example of perturbed and true posterior when using subset average likelihoods. The data used is simulated from the model defined by $\latent \sim \Norm(0,1)$, $x_i\sim \Norm(\latent, 1)$, $n=10$ for two different random seeds. The subset sizes where chosen to be $m=2$ (top row),  $m=5$ (top row) and $m=9$ (bottom row).}\label{fig:supp:avg_likelihood}
\end{figure}

\subsection*{Bias in \gls{CHIVI}}
\Cref{fig:supp:biased} illustrates the systematic error introduced in the optimal parameters of \gls{CHIVI} when using biased gradients.
\begin{figure}[h]
	\centering
	\begin{subfigure}[b]{0.4\textwidth}
        \includegraphics[width=\textwidth]{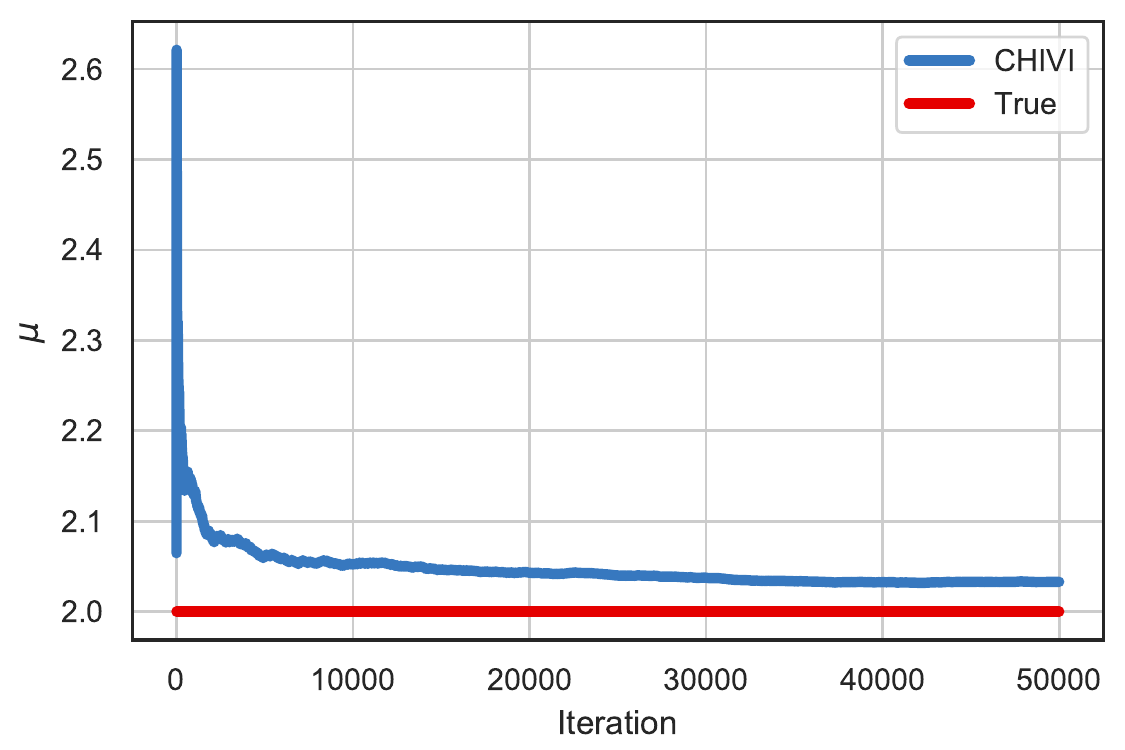}
    \end{subfigure}
    ~ %add desired spacing between images, e. g. ~, \quad, \qquad, \hfill etc. 
      %(or a blank line to force the subfigure onto a new line)
    \begin{subfigure}[b]{0.4\textwidth}
        \includegraphics[width=\textwidth]{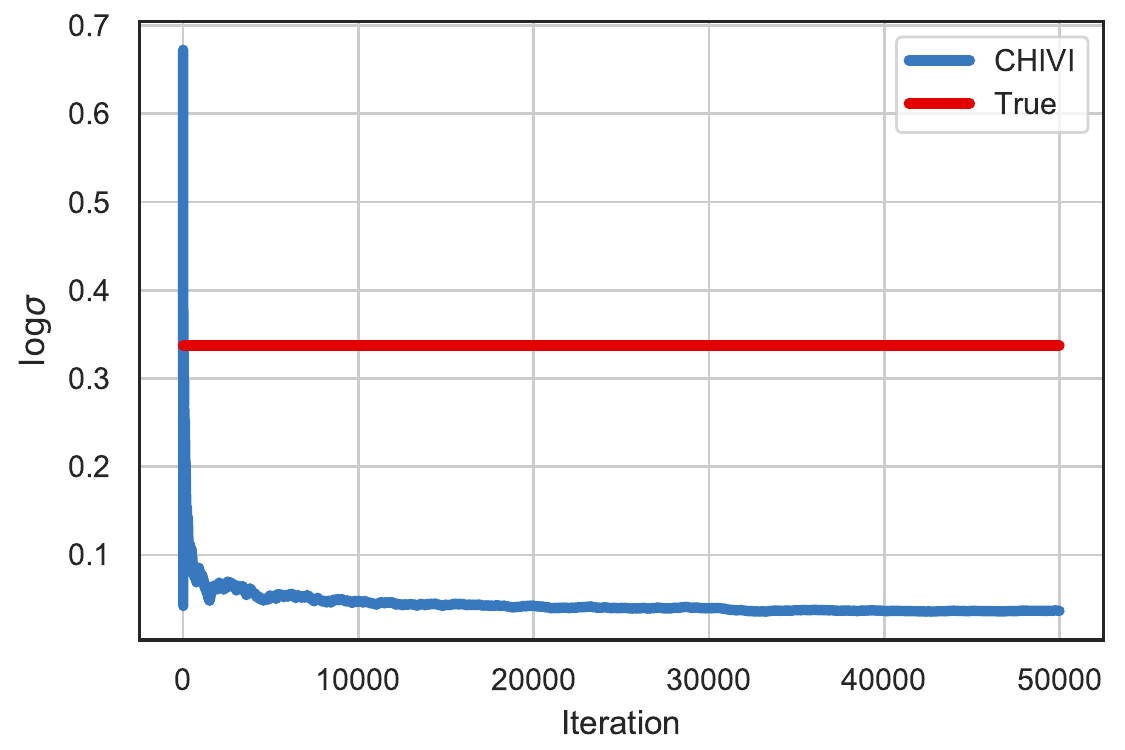}
    \end{subfigure}
	\caption{Example of learnt variational parameters for \gls{CHIVI}, as well as true parameters when using a Gaussian approximation to a skew normal posterior distribution.}\label{fig:supp:biased}
\end{figure}

\end{document}